\DeclareMathOperator*{\argmin}{argmin}
\newtheorem{remark}{Remark}
\newtheorem{lemma}{Lemma}
\newtheorem{theorem}{Theorem}
\newtheorem{definition}{Definition}
\DeclareMathOperator*{\argmax}{argmax}
\newsavebox\CBox
\def\textBF#1{\sbox\CBox{#1}\resizebox{\wd\CBox}{\ht\CBox}{\textbf{#1}}}
\newenvironment{proof}{{\noindent\it Proof.}\ }{\hfill $\square$\par}
\author{Ben-Zheng Li
\thanks{B.-Z. Li, X.-L. Zhao, and T.-Z Huang are with the School of Mathematical Sciences, University of Electronic Science and Technology of China, Chengdu, Sichuan 611731, P. R. China. E-mails: lbz1604179601@126.com, xlzhao122003@163.com, and tingzhuhuang@126.com.},
Xi-Le Zhao,
Teng-Yu Ji
\thanks{T.-Y Ji is with the School of Mathematics and Statistics, Northwestern Polytechnical University, Xian, 710072, Shanxi, P. R. China. E-mail:tengyu\_j66@126.com.},
Xiong-Jun Zhang
 \thanks{X.-J Zhang is with the School of Mathematics and Statistics and Hubei Key
Laboratory of Mathematical Sciences, Central China Normal University,
Wuhan 430079, China. E-mail: xjzhang@mail.ccnu.edu.cn.},
Ting-Zhu Huang}
\begin{document}
\title{Nonlinear Transform Induced Tensor Nuclear Norm for Tensor Completion}

\maketitle

\begin{abstract}

 The linear transform-based tensor nuclear norm (TNN) methods have recently obtained promising results for
tensor completion. The main idea of this type of methods is exploiting the low-rank structure of frontal
slices of the targeted tensor under the linear transform along the third mode. However, the low-rankness
of frontal slices is not significant under linear transforms family. To better pursue the low-rank approximation, we
propose a nonlinear transform-based TNN (NTTNN). More concretely, the proposed nonlinear transform is a composite transform consisting of the linear semi-orthogonal transform along the third mode and the element-wise nonlinear transform on frontal slices of the tensor under the linear semi-orthogonal transform, which are indispensable and complementary in the composite transform to fully exploit the underlying low-rankness. Based on the suggested low-rankness metric, i.e., NTTNN, we propose a low-rank tensor completion (LRTC) model. To tackle the resulting nonlinear and nonconvex optimization model, we elaborately design the proximal alternating minimization (PAM) algorithm and establish the theoretical convergence guarantee of the PAM algorithm. Extensive experimental results on hyperspectral images, multispectral images, and videos show that the our method outperforms linear transform-based state-of-the-art LRTC methods qualitatively and quantitatively.
\end{abstract}

\begin{keyword}
Nonlinear transform, tensor nuclear norm, proximal alternating minimization, tensor completion
\end{keyword}


\section{Introduction}
With the development of scientific computing, high-dimensional data structure is
becoming more and more complicated.
As the high-dimensional extension of vectors and matrices,
tensors can represent higher-dimensional data,
such as hyperspectral images (HSIs) \cite{baiminru,Hewei1}, multispectral images (MSIs) \cite{dingmengjsc}, and videos \cite{zhangxiongjunpami}, which play an increasingly important role in large-scale scientific computing.
However, tensor data frequently undergo missing entries or undersample problem due to various unpredictable or unavoidable situations when acquiring it.
The problem
of recovering missing entries via the observed incomplete tensor is called tensor
completion (TC) \cite{jsctc1}, which is a fundamental problem and has received
considerable attention in scientific computing \cite{jsc1,jsc2,jsc3,zengjinshan}. Generally, multi-dimensional data is internally correlated and the internal redundancy property could be measured by the powerful rank function. Therefore, the low-rank TC (LRTC) problem can be formulated as follows
\begin{figure}[htp!]
\vspace{-1cm}
\hspace{-1.5cm}
  \includegraphics[width=17cm,height=8cm]{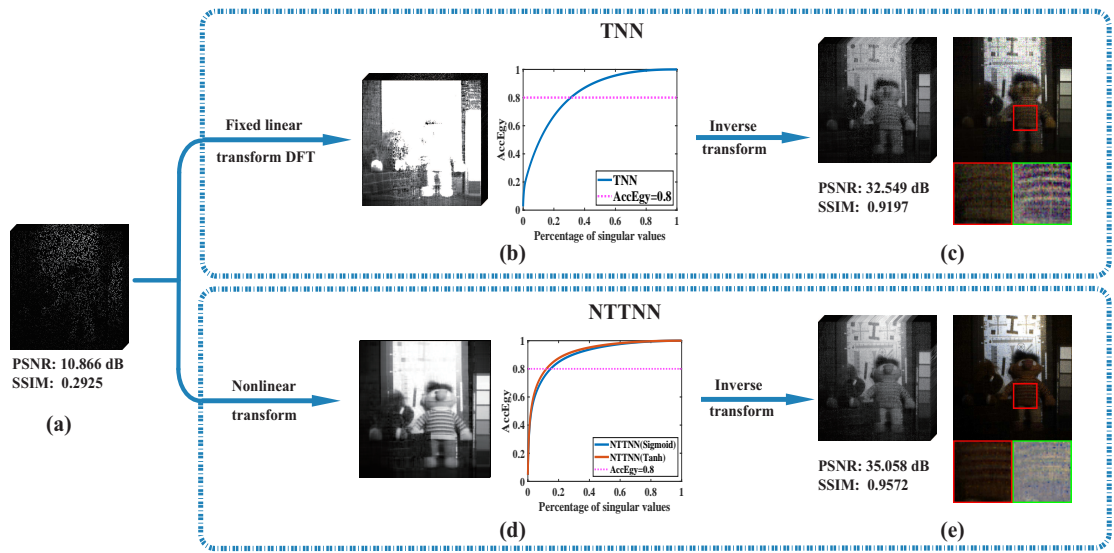}
 \caption{The pipeline of the linear transform-based TNN method and the proposed NTTNN method on MSI $\mathit{Toy}$ for sampling rate (SR) 10\%. (a)The observed incomplete tensor. (b)The transformed tensor under the DFT and the AccEgy with the corresponding percentage of singular values. (c)The recovered tensor by the TNN method and the corresponding pseudo-color image composed of the 1-st, 2-nd, and 5-th bands with the zoom-in region and the corresponding residual part.
 (d)The transformed tensor under the proposed nonlinear transform $\psi$ and the AccEgy with the corresponding percentage of singular values.
 (e)The recovered tensor by the NTTNN method and the corresponding pseudo-color image composed of the 1-st, 2-nd, and 5-th bands with the zoom-in region and the corresponding residual part.}
 \label{dongjitu}
\end{figure}
\begin{equation*}
    \begin{aligned}
    &\min \limits_{\mathcal{X}} \ \textrm{rank}(\mathcal{X}) \\
    & \text{s.t. } \ \mathcal{X}_{\Omega} = \mathcal{O}_{\Omega},
    \end{aligned}	
\end{equation*}
where, $\mathcal{X}$ and $\mathcal{O}$ denote the required and the observed tensors, respectively, and $\Omega$ is the index set of the observed elements.

 However, different from the matrix case, the tensor rank has no unique definition.
 The CP-rank \cite{CPrank} is defined as the minimum number of rank-one tensors that generate the target tensor, which has been successfully applied in LRTC \cite{CP1,xuejize2}. However, to determine the CP-rank of the target tensor is NP-hard \cite{CPNPhard}. The Tucker-rank is defined as a vector constituted of ranks of each mode-$k$ matricization of the tensor. The Tucker-rank has been applied in LRTC problem by minimizing its convex surrogate \cite{HaLRTC} or non-convex surrogates \cite{Tucker1,Tucker2}.
 Moreover, a series of tensor network decomposition-based ranks are proposed, such as tensor train (TT)-rank \cite{TTrank}, tensor ring
(TR)-rank \cite{TR}, and fully-connected tensor network (FCTN)-rank \cite{FCTN}. All of them have been achieved great success in
higher-order LRTC \cite{2017TIPTT,FCTN,TRLRF}.

 Recently, the tensor tubal-rank \cite{klimer} has been proposed,
 which avoids the loss of inherent information in unfolding of the target tensor.
 Since minimizing the tubal-rank of the target tensor is NP-hard, Zhang $\mathit{et}$ $\mathit{al}.$ \cite{zhangtnn} proposed a convex surrogate, the tensor nuclear norm (TNN) of underlying tensor, to solve LRTC problem.
 The TNN-based LRTC model could be mathematically rewritten as
 \begin{equation}
    \begin{aligned}
    &\min_{\mathcal{X}}\|\mathcal{X}\|_{\operatorname{TNN}}\\
    &\text{s.t. } \ \mathcal{X}_{\Omega} = \mathcal{O}_{\Omega},
    \end{aligned}\label{TNN1}	
    \end{equation}
where $\|\mathcal{X}\|_{\operatorname{TNN}}$ is TNN of $\mathcal{X}$ (see Def. \ref{tnn}).
 Given $\mathcal{X}\in\mathbb{R}^{n_1\times{n_2}\times{n_3}}$, we define $\mathcal{Z} = \mathcal{X}\times_3\textbf{F}_{n_3}$,
 where $\textbf{F}_{n_3}$ and $\times_3$ respectively denotes the Discrete Fourier Transform (DFT) and the mode-3 product of a tensor and a matrix. Since $\textbf{F}_{n_3}$ is invertible, we have $\mathcal{X}=\mathcal{Z}\times_3\textbf{F}_{n_3}^{-1}$. Combining the definition of TNN, the problem (\ref{TNN1}) is equivalent to the following problem:
 \begin{equation}
    \begin{aligned}\label{TNN2}
    &\min \limits_{\mathcal{Z}} \sum_{i=1}^{n_3}\|\mathbf{Z}_i\|_* \\
    &\ \text{s.t.} (\mathcal{Z}\times_3\textbf{F}_{n_3}^{-1})_{\Omega} = \mathcal{O}_{\Omega},
    \end{aligned}
\end{equation}
where $\textbf{F}_{n_3}^{-1}$ denotes the inverse DFT matrix \cite{klimer}, $\|\cdot\|_*$ is the matrix nuclear norm, and $\mathbf{Z}_i$ denotes $i$-th frontal slice of $\mathcal{Z}$.
The problem (\ref{TNN2}) implies that low-tubal-rank structure could be characterized by the summation of nuclear norm of frontal slices under the linear
DFT.
\begin{figure}[htp!]
  \centering
  \includegraphics[width=14cm,height=5cm]{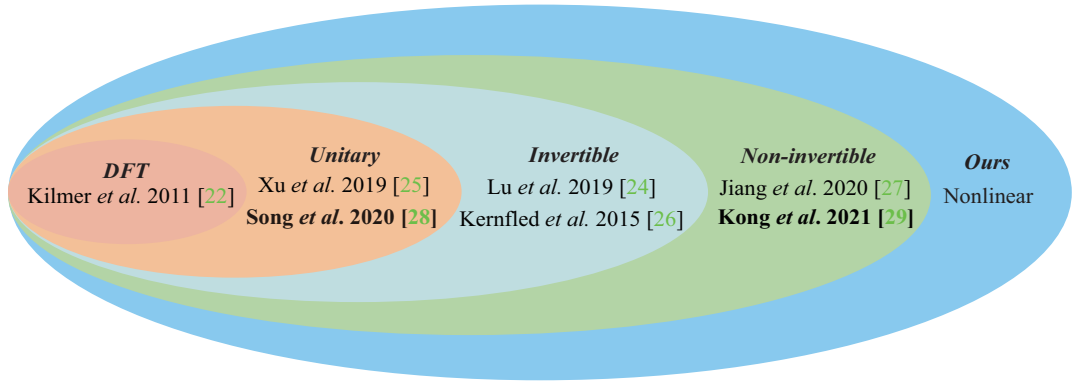}
 \caption{The evolution of transform-based TNN methods. The learned transforms are highlighted in boldface, while the other transforms are pre-defined.}\label{yanhuatu}
\end{figure}

To obtain a better low-rank approximation of frontal slices of the transformed tensor, researchers consider different linear transforms instead of DFT.
The discrete cosine transform (DCT) was proposed
as an alternatively implemented of DFT in \cite{DCT_Lu,DCT_based}, which enjoys less computational cost and obtains a better low-rank approximation compared with DFT.
Latterly, Kernfeld $\mathit{et}$ $\mathit{al}.$ \cite{svd_unitary_transform}
and Lu $\mathit{et}$ $\mathit{al}.$ \cite{DCT_Lu} noted that the DFT can be replaced by any invertible
transform.
Subsequently, Jiang $\mathit{et}$ $\mathit{al}.$ \cite{framelet_jiang} introduced the semi-invertible framelet transform,
which broke through the restriction of invertibility compared with invertible transform.
It is worth noting that all above mentioned transforms
are pre-defined and data-independent. Choosing above transforms are appealing
because it is cheap and robust to initialization.
Another prominent paradigm is to employ the learning-based methods
that further make transforms data-dependent. Song $\mathit{et}$ $\mathit{al}.$ \cite{unitary_transform} suggested the transform generated by singular values decomposition (SVD) of mode-3 unfolding of the observed
tensor. Kong $\mathit{et}$ $\mathit{al}.$ \cite{tensor_Q_rank} and Jiang $\mathit{et}$ $\mathit{al}.$ \cite{DTNN} further proposed
that the observed tensor is used to initialize the transform, which is updated in the algorithm.
The above learned methods have shown that the learned
transform is flexible for various types of data. The evolution of transform-based TNN
methods are summarised in Fig. \ref{yanhuatu}.


Although above transform-based TNN methods have shown effectiveness for LRTC, they still have much room for improvement.
Specifically, they pursue the low-rankness of frontal slices of the transformed tensor under a linear transform, which limits their recovered performance especially when sampling rate (SR) is low.

To obtain a better low-rank approximation of frontal slices of the transformed tensor,
we propose a nonlinear transform, i.e., $\psi:\mathcal{X}\in\mathbb{R}^{n_1\times{n_2}\times{n_3}}\mapsto\psi(\mathcal{X})\in\mathbb{R}^{n_1\times{n_2}\times{r}}$. More specifically,  $\psi(\mathcal{X})=\phi(\mathcal{X}\times_3\mathbf{T})$, where $\times_3$ denotes the mode-3 product of the tensor $\mathcal{X}$ and the matrix $\mathbf{T}$ (see Def. \ref{mode n product}), $\mathbf{T}\in\mathbb{R}^{r\times{n_3}}$ is a learned linear semi-orthogonal transform along the third mode, and $\phi:\mathbf{Z}_{i}\mapsto\phi(\mathbf{Z}_{i})$ is the element-wise nonlinear transform on frontal slices $\mathbf{Z}_i(i=1,\cdots,d)$ of the transformed tensor $\mathcal{Z}$ of $\mathcal{X}$ under the linear semi-orthogonal transform $\mathbf{T}$.
 In the proposed composite transform $\psi$, the learned linear semi-orthogonal transform $\mathbf{T}$ and the element-wise nonlinear transform $\phi$ are indispensable and complementary to fully exploit the underlying low-rankness (we have discussed the indispensability of $\mathbf{T}$ and $\phi$ in the Section \ref{disTandpsipart}).

Based on the proposed nonlinear transform, we propose a nonlinear transform-based tensor nuclear norm (NTTNN) to exploit the low-rankness of underlying tensor under the proposed nonlinear transform $\psi$ and establish the corresponding LRTC model. Fig. \ref{dongjitu} is the pipeline of the linear transform-based TNN method \cite{zhangtnn} and the proposed NTTNN method on MSI $\mathit{Toy}$ for SR 10\%.
From Fig. \ref{dongjitu}(b) and (d), we can observe that NTTNN with different nonlinear transforms $\phi$ (Tanh and Sigmoid) need less singular values than TNN that is based on linear transform when achieves the same accumulation energy ratio (AccEgy). In other words, the energy of singular values of nonlinear transformed tensor is more concentrated and NTTNN could make the frontal slices of underlying tensor more low-rank. As such, NTTNN can obtain the better low-rank approximation of the underlying tensor under the nonlinear transform $\psi$, which could recover more textures (see zoom-in regions and residual parts between ground truth and recovered tensors) and obtain a better recovered result than TNN, which is demonstrated in Fig. \ref{dongjitu}(c) and (e).
Moreover, the proposed NTTNN is a unified transform-based TNN family, which shows in Fig. \ref{yanhuatu}.

The main contributions of this paper is threefold:
\begin{itemize}
  \item We propose a nonlinear transformed TNN termed as NTTNN, which could enhance the low-rank approximation of the underlying tensor and can be regarded as a unified transform-based TNN family including many classic transform-based TNN methods.
  \item Based on NTTNN, we propose the LRTC model and develop an efficient multi-block proximal alternating minimization (PAM) algorithm with theoretical guarantee to solve the resulting model.
  \item Extensive experiments on HSIs, MSIs, and videos demonstrate that NTTNN outperforms linear transform-based state-of-the-art LRTC methods quantitatively and visually.
\end{itemize}

The rest part of the paper is arranged as follows. In Section \ref{section2}, we briefly introduce some essential notations and basic definitions used in this paper.
In Section \ref{section3}, we propose the NTTNN model for LRTC and establish the corresponding algorithm with theoretical convergence guarantee.
Section \ref{section4} evaluates the performance of the proposed model. Section \ref{section5} gives some discussions. Finally, Section \ref{section6} concludes this paper.
\section{Notations and Preliminaries}\label{section2}
In this part, we introduce some basic notations and definitions for developing the proposed nonlinear transform $\psi$ and NTTNN.

The basic notations used in this paper are presented in Table \ref{notations}.

\begin{table}[htp!]
\center
\caption{Basic notations.}\label{notations}\vspace{0.3cm}
\renewcommand\arraystretch{1.5}
\setlength{\tabcolsep}{1mm}
\begin{tabular}{c|c}
\Xhline{1.5pt}
Notations                        &       Explanations \\
\Xhline{1.5pt}
$\mathit{x}$, $\mathbf{x}$, $\mathbf{X}$, $\mathcal{X}$   & Scalar, vector, matrix, tensor. \\
$x_{ijk}$                        &       The $(i,j,k)$-th element of tensor $\mathcal{X}$.\\

$\mathcal{X}(:,:,i)$ or $\mathbf{X}_i$              &      The $\mathit{i}$-th frontal slice of tensor $\mathcal{X}$. \\
$\operatorname{Tr}(\mathbf{X})$   &      The trace of $\mathbf{X} \in \mathbb{R}^{n\times{n}}$ with $\operatorname{Tr}(\mathbf{X})=\sum_{i=1}^nx_{ii}$.\\
$\left \langle \mathcal{X}, \mathcal{Y} \right\rangle$     &    \!  The inner product of  \!$\mathcal{X}$ and  \!$\mathcal{Y}$ \! with $\left \langle  \mathcal{X},  \!\mathcal{Y} \right\rangle \!= \!\sum_{ijk}x_{ijk}y_{ijk}. $\\
$\|\mathcal{X}\|_F$               &      The Frobenius norm of $\mathcal{X}$ with $\|\mathcal{X}\|_F=\sqrt{\left \langle \mathcal{X}, \mathcal{X} \right\rangle}$.\\
$\|\mathbf{X}\|_*$                &      The nuclear norm of $\mathbf{X}$ with $\|\mathbf{X}\|_*=\operatorname{Tr}(\sqrt{\mathbf{X}^\top\mathbf{X}})$.\\
\Xhline{1.5pt}
\end{tabular}
\end{table}

Based on above basic notations, we give following definitions:

\begin{definition}(mode-$\mathit{k}$ product \cite{modenproduct})\label{mode n product} The mode-$k$ product of a tensor $\mathcal{Z}$ $\in$ $\mathbb{R}^{n_1 \times n_2 \times \cdots\times n_i}$ and a matrix $\mathbf{D}$ $\in$ $\mathbb{R}^{h_i\times{n_i}}$ is defined as
$$
\mathcal{X}=\mathcal{Z}\times_{k}\mathbf{D}=\mathtt{fold}_k(\mathbf{D}\mathbf{Z}_{(k)}),
$$
where $\mathbf{Z}_{(k)}$ is the mode-$k$ matricization of $\mathcal{Z}$ and $\mathtt{fold}_k(\cdot)$ is the corresponding inverse operator of matricization that rearranges elements of a matrix into a tensor.
\end{definition}
\begin{definition}\label{tnn}
(tensor nuclear norm (TNN) \cite{zhangtnn}) Let $\mathcal{X}$ $\in$ $\mathbb{R}^{n_1\times{n_2}\times{n_3}}$, the tensor nuclear norm of $\mathcal{X}$ is
$$
\|\mathcal{X}\|_{\rm{TNN}}=\sum_{i=1}^{n_3}\|\mathbf{{Z}}_{i}\|_\ast,
$$
where $\mathbf{Z}_i$ is the $i$-th frontal slice of the transformed tensor
$\mathcal{Z}=\mathcal{X}\times_3\mathbf{F}_{n_3}$.
\end{definition}
\section{Proposed Model and Solving Algorithm}\label{section3}
\subsection{Proposed Model}
The existing transform-based TNN methods employ linear transform to exploit the low-rankness of
frontal slices of the underlying tensor. However, the linear transform does not always make frontal slices of the underlying tensor obviously low-rank, which is shown in Fig. \ref{dongjitu}(d). To tackle this problem, we propose a nonlinear transform that is defined as
$$\psi(\mathcal{X})=(\phi \circ \mathbf{T})(\mathcal{X})=\phi(\mathcal{X}\times_3\mathbf{T}),$$
where, $\mathbf{T}\in\mathbb{R}^{r\times{n_3}}$ denotes a learned linear semi-orthogonal transform and satisfies $\mathbf{T}\mathbf{T}^\top=\mathbf{I}_{r\times{r}}$, and $\phi:\mathbf{Z}_{i}\mapsto\phi(\mathbf{Z}_{i})$ is the element-wise nonlinear transform on frontal slices $\mathbf{Z}_i(i=1,\cdots,d)$ of the transformed tensor $\mathcal{Z}$ of $\mathcal{X}$ under the linear semi-orthogonal transform $\mathbf{T}$.

\begin{remark}
 The introduced nonlinear composed transform, which consists of a nonlinear transform and a learned linear semi-orthogonal transform, can be interpreted as a single layer semi-orthogonal neural network \cite{orthogonal1,orthogonal2}. More specifically, the nonlinear transform is represented by the nonlinear activation function and  the linear semi-orthogonal transform corresponds to the semi-orthogonal fully connected layer, where the semi-orthogonality finds to be a favorable property for training deep convolutional neural network \cite{deepcnn}.
 \end{remark}

Based on the proposed nonlinear transform, we define the nonlinear transform-based tensor nuclear norm (NTTNN) to exploit the low-rankness of a tensor as follows:
$$\|\mathcal{X}\|_{\rm{NTTNN}}=\left\|\psi(\mathcal{X})\right\|_*
=\sum_{i=1}^{r}\left\|\phi(\mathbf{Z}_i)\right\|_*,$$
where $\mathcal{Z}=\mathcal{X}\times_3 \mathbf{T}$, $\|\cdot\|_*$ denotes the matrix nuclear norm, and $\mathbf{Z}_i$ denotes the $i$-th frontal slice of $\mathcal{Z}$.
Here, the nonlinear transform $\psi(\cdot)$ could make frontal slices of underlying tensor obviously low-rank, which is shown in Fig. \ref{dongjitu}(d) that the singular values of $\phi(\mathcal{Z})$ is more concentrated. As a result, the proposed NTTNN could better exploit the low-rankness of latent tensor than TNN-based methods under linear transforms along the third mode.

Based on NTTNN, we propose the following LRTC model:
\begin{equation}
\begin{aligned}
&\min_{\mathcal{X}, \mathcal{Z}, \mathbf{T}}  \sum_{i=1}^{r}\left\|\phi(\mathbf{Z}_i)\right\|_*\\
& ~~\text{s.t}.~~~ \mathcal{X}_{\Omega} = \mathcal{O}_{\Omega}, \mathcal{X} = \mathcal{Z}\times_3 \mathbf{T}^\top, \mathbf{T}\mathbf{T}^\top=\mathbf{I}_{r\times{r}},
\end{aligned}\label{model1}
\end{equation}
where, $\mathcal{Z}\in\mathbb{R}^{n_1\times n_2\times r}$, $\mathbf{T}\in\mathbb{R}^{r\times {n_3}}$ is the semi-orthogonal transform, and $\mathbf{I}\in\mathbb{R}^{r\times r}$ is an identity matrix.

\begin{remark}
NTTNN is a unified transform-based TNN family, which is shown in Fig. \ref{yanhuatu}. More specifically, when the nonlinear function $\phi(\cdot)$ in the proposed transform $\psi(\cdot)$ is defined as
$\phi(x)=x$, NTTNN is degraded to previous transform-based TNN methods according to the different $\mathbf{T}$:
if $\mathbf{T}$ is the fixed DFT, it is equivalent to the typical TNN method \cite{zhangtnn};
if $\mathbf{T}$ is the fixed DCT or the fixed framelet transform, it is equivalent to the DCT-based TNN methods \cite{DCT_Lu,DCT_based} or the framelet-based TNN method \cite{framelet_jiang}, respectively;
if $\mathbf{T}$ is the learned orthogonal transform or the learned semi-orthogonal transform, it is equivalent to the transform-based TNN method \cite{unitary_transform} or
the tensor Q-rank method \cite{tensor_Q_rank}, respectively.
\end{remark}

\subsection{Solving Algorithm}
By introducing indicator functions
$$
\Phi(\mathcal{X})=
\begin{cases}
   0, & \mathcal{X}_{\Omega}=\mathcal{O}_{\Omega}, \\
   +\infty, & \text{otherwise},
   \end{cases}
\begin{aligned}
\Psi(\textbf{T})=
\begin{cases}
   0, & \textbf{T}\textbf{T}^\top=\textbf{I}_{r\times{r}},  \\
   +\infty, & \text{otherwise},
   \end{cases}
\end{aligned}
$$
the problem (\ref{model1}) can be equivalently rewritten as follows:
\begin{equation}
\begin{aligned}
&\min_{\mathcal{X},\mathcal{Z},\textbf{T}}~~\sum_{i=1}^{r}\left\|\phi(\mathbf{Z}_i)\right\|_*
+\Phi(\mathcal{X})+\Psi(\textbf{T})\\
&~~\text{s.t.}~~~~~\mathcal{X} = \mathcal{Z} \times_3{\textbf{T}^\top}.
\label{mymodels2}
\end{aligned}
\end{equation}
To tackle this optimization problem, we introduce the auxiliary variable $\mathcal{Y}=\phi(\mathcal{Z})$ and lean upon the half quadratic splitting \cite{HQS} tips to transform the
constrained problem (\ref{mymodels2}) into the following unconstrained problem:
\begin{equation}
\min_{\mathcal{X},\mathcal{Z},\mathcal{Y},\textbf{T}}~~
\sum_{i=1}^{r}\left\|\mathbf{Y}_i\right\|_*
+\frac{\alpha}{2}\|\mathcal{X}-\mathcal{Z}\times_{3}\textbf{T}^\top\|_F^2
+\frac{\beta}{2}\|\mathcal{Y}-\phi(\mathcal{Z})\|_F^2
+\Phi(\mathcal{X})+\Psi(\textbf{T}),
\label{mymodels3}
\end{equation}
where $\alpha,\beta>0$ are two penalty parameters.

We denote the objective function of problem (\ref{mymodels3}) as $f(\mathcal{X},\mathcal{Y},\mathcal{Z},\textbf{T})$.
Under the proximal alternating minimization (PAM) algorithm \cite{PAM} framework, we can alternatively update each variable:
\begin{equation}
\left\{\begin{aligned}
\mathcal{X}^{k+1}\in\ &\argmin_{\mathcal{X}}\{f(\mathcal{X},\mathcal{Y}^k,\mathcal{Z}^{k},\textbf{T}^{k})+\frac{\rho_1}{2}\|\mathcal{X}-\mathcal{X}^{k}\|_F^2\},\\
\mathcal{Y}^{k+1}\in\ &\argmin_{\mathcal{Y}}\{f(\mathcal{X}^{k+1},\mathcal{Y},\mathcal{Z}^k,\textbf{T}^{k})+\frac{\rho_2}{2}\|\mathcal{Y}-\mathcal{Y}^{k}\|_F^2\},\\
\mathcal{Z}^{k+1}\in\ &\argmin_{\mathcal{Z}}\{f(\mathcal{X}^{k+1},\mathcal{Y}^{k+1},\mathcal{Z},\textbf{T}^k)+\frac{\rho_3}{2}\|\mathcal{Z}-\mathcal{Z}^{k}\|_F^2\},\\
\textbf{T}^{k+1}\in\ &\argmin_{\textbf{T}}\{f(\mathcal{X}^{k+1},\mathcal{Y}^{k+1},\mathcal{Z}^{k+1},\textbf{T})+\frac{\rho_4}{2}\|\textbf{T}-\textbf{T}^k\|_F^2\},\\
\end{aligned}
\right.\label{sequence}
\end{equation}
where $\rho_1$, $\rho_2$, $\rho_3$, and $\rho_4$ are four positive constants, and $k$ denotes the iteration number. Next, we give details for updating the subproblems about $\mathcal{X}$, $\mathcal{Y}$, $\mathcal{Z}$, and $\textbf{T}$.
\par $\bullet$ \textbf{Updating} $\mathcal{X}$ \textbf{subproblem}
\par The $\mathcal{X}$ subproblem is
\begin{equation}
\begin{aligned}\label{xsubproblem}
\argmin_{\mathcal{X}}\frac{\alpha}{2}\|\mathcal{X}-\mathcal{Z}^k\times_{3}{\textbf{T}^k}^\top\|_F^2
+\frac{\rho_1}{2}\|\mathcal{X}-\mathcal{X}^k\|_F^2+\Phi(\mathcal{X}).
\end{aligned}
\end{equation}
The closed-form solution of the problem (\ref{xsubproblem}) is
\begin{equation}\label{xsolve}
\mathcal{X}^{k+1}=\left(\frac{\alpha\mathcal{Z}^k\times_{3}{\textbf{T}^k}^\top+\rho_1\mathcal{X}^k}{\alpha+\rho_1}\right)_{\Omega^c}+\mathcal{O}_\Omega,
\end{equation}
where $\Omega^c$ denotes the complement set of $\Omega$.
\par $\bullet$ \textbf{Updating} $\mathcal{Y}$ \textbf{subproblem}
\par The $\mathcal{Y}$ subproblem is
$$
\begin{aligned}\label{ysubproblem}
&\argmin_{\mathcal{Y}}\sum_{i=1}^{r}\left\|\mathbf{Y}_i\right\|_*
+\frac{\beta}{2}\|\mathcal{Y}-\phi(\mathcal{Z}^k)\|_F^2+\frac{\rho_2}{2}\|\mathcal{Y}-\mathcal{Y}^k\|_F^2.
\end{aligned}
$$
The $\mathcal{Y}$ subproblem can be decomposed into the following $r$ subproblems:
\begin{equation}
\argmin_{\textbf{Y}_{i}}\|\textbf{Y}_{i}\|_*+\frac{\beta+\rho_2}{2}\|\textbf{Y}_{i}-\mathbf{H}_i\|_F^2,\label{rY}
\end{equation}
where $\mathbf{H}^k_i=\frac{\beta \phi(\textbf{Z}^k_i) + \rho_2 \textbf{Y}^{k}_i}{\beta+\rho_2}$.
By employing singular value thresholding (SVT) operator \cite{SVT}, the closed-form solution of each subproblem (\ref{rY}) is
\begin{equation}\label{ysolve}
\begin{aligned}
\textbf{Y}^{k+1}_i = \mathcal{T}_{\frac{1}{\beta+\rho_2}}(\mathbf{H}^k_i) =\hat{\textbf{U}}\mathcal{T}_{\frac{1}{\beta+\rho_2}}(\hat{\textbf{D}})\hat{\textbf{V}}^\top,
\end{aligned}
\end{equation}
where $(\hat{\textbf{U}},\hat{\textbf{D}},\hat{\textbf{V}})$ are derived from SVD of $\mathbf{H}^k_i$ and $\mathcal{T}_{\frac{1}{\beta+\rho}}(\hat{\textbf{D}})=\mathtt{diag}(\operatorname{max}(\sigma_j-\frac{1}{\beta+\rho_2},0))$.
\par $\bullet$ \textbf{Updating} $\mathcal{Z}$ \textbf{subproblem}
\par The $\mathcal{Z}$ subproblem is
$$
\begin{aligned}\label{zsubproblem}
&\argmin_{\mathcal{Z}}~\frac{\alpha}{2}\|\mathcal{X}^{k+1}-\mathcal{Z}\times_{3}{\textbf{T}^k}^\top\|_F^2+\frac{\beta}{2}\|\mathcal{Y}^{k+1}-\phi(\mathcal{Z})\|_F^2
+\frac{\rho_3}{2}\|\mathcal{Z}-\mathcal{Z}^k\|_F^2.
\end{aligned}
$$
The $\mathcal{Z}$ subproblem can be equivalently formulated as follows:
\begin{equation}
\begin{aligned}
&\argmin_{\textbf{Z}_{(3)}}~\frac{\alpha}{2}\|\textbf{Z}_{(3)}-{\textbf{T}^k}\textbf{X}^{k+1}_{(3)}\|_F^2+\frac{\beta}{2}\|\phi(\textbf{Z}_{(3)})-\textbf{Y}^{k+1}_{(3)}\|_F^2
+\frac{\rho_3}{2}\|\textbf{Z}_{(3)}-\textbf{Z}^k_{(3)}\|_F^2\\
&=\argmin_{\textbf{Z}_{(3)}}~\frac{\alpha+\rho_3}{2}\|\textbf{Z}_{(3)}-\mathbf{G}^k\|_F^2+\frac{\beta}{2}\|\phi(\textbf{Z}_{(3)})-\textbf{Y}^{k+1}_{(3)}\|_F^2,
\end{aligned}\label{zsubproblem2}
\end{equation}
where $\textbf{G}^k=\frac{\alpha{\textbf{T}^k}\textbf{X}^{k+1}_{(3)}+\rho_3\textbf{Z}^k_{(3)}}{\alpha+\rho_3}$. We denote the $(i,j)$-th element of $\mathbf{Z}^k_{(3)}$, $\mathbf{G}^k$, and $\mathbf{Y}^k_{(3)}$ as $z_{ij}^k=\mathbf{Z}^k_{(3)}(i,j)$, $g^k_{ij}=\mathbf{G}^k(i,j)$, and $y^k_{ij}=\mathbf{Y}^k_{(3)}(i,j)$, respectively. Then, the problem (\ref{zsubproblem2}) can be decomposed into $n_1n_2r$ one-dimensional nonlinear equations as follows:
\begin{equation}\label{zsolve}
\argmin_{z_{ij}}~\frac{\alpha+\rho_3}{2}(z_{ij}-g^k_{ij})^2+\frac{\beta}{2}(\phi(z_{ij})-y^{k+1}_{ij})^2,
\end{equation}
which can be solved by the Newton method.
\par $\bullet$ \textbf{Updating} $\textbf{T}$ \textbf{subproblem}
\par The $\textbf{T}$ subproblem is
\begin{equation}
\begin{aligned}\label{tsubproblem1}
\argmin_{\textbf{T}}~\frac{\alpha}{2}\|\mathcal{X}^{k+1}-\mathcal{Z}^{k+1}\times_{3}
\textbf{T}^\top\|_F^2+\frac{\rho_4}{2}
\|\textbf{T}-\textbf{T}^k\|_F^2+\Psi(\textbf{T}).
\end{aligned}
\end{equation}
Note that the problem (\ref{tsubproblem1}) can be equivalently transformed the following problem:
\begin{equation}
\begin{aligned}\label{Tmax}
&\argmin_{\textbf{T}}~\frac{\alpha}{2}\|\mathcal{X}^{k+1}-\mathcal{Z}^{k+1}\times_{3}
\textbf{T}^\top\|_F^2+\frac{\rho_4}{2}
\|\textbf{T}-\textbf{T}^k\|_F^2+\Psi(\textbf{T})\\
=&\argmin_{\textbf{T}}~\frac{\alpha}{2}\|\textbf{X}_{(3)}^{k+1}-\textbf{T}^\top\textbf{Z}_{(3)}^{k+1}\|_F^2+\frac{\rho_4}{2}\|\textbf{T}-\textbf{T}^k\|_F^2+\Psi(\textbf{T})\\
=&\argmin_{\textbf{T}}~\frac{\alpha}{2}\operatorname{Tr}[(\textbf{X}_{(3)}^{k+1}-\textbf{T}^\top\textbf{Z}_{(3)}^{k+1})^\top(\textbf{X}_{(3)}^{k+1}-\textbf{T}^\top\textbf{Z}_{(3)}^{k+1})]\\
&~~~~~~~~~~~+\frac{\rho_4}{2}\operatorname{Tr}[(\textbf{T}-\textbf{T}^k)^\top(\textbf{T}-\textbf{T}^k)]+\Psi(\textbf{T})\\
=&\argmax_{\textbf{T}}~\operatorname{Tr}[(\alpha\textbf{X}_{(3)}^{k+1}(\textbf{Z}_{(3)}^{k+1})^\top+\rho_4{\textbf{T}^k}^\top)\textbf{T}]-\Psi(\textbf{T}),
\end{aligned}
\end{equation}
where $\operatorname{Tr}(\mathbf{X})$ denotes the trace of matrix $\mathbf{X}$.
Supposing the SVD of $\alpha\textbf{X}_{(3)}^{k+1}(\textbf{Z}_{(3)}^{k+1})^\top+\rho_4{\textbf{T}^k}^\top$ is $\tilde{\textbf{U}}\tilde{\textbf{D}}\tilde{\textbf{V}}^\top$, we have
$$\operatorname{Tr}(\tilde{\textbf{U}}\tilde{\textbf{D}}\tilde{\textbf{V}}^\top\textbf{T})
=\operatorname{Tr}(\tilde{\textbf{D}}\tilde{\textbf{U}}\tilde{\textbf{V}}^\top\textbf{T}).$$
Since $\tilde{\textbf{D}}$ is the diagonal matrix, the maximization problem in (\ref{Tmax}) can be maximized when the diagonal elements of $\tilde{\textbf{U}}\tilde{\textbf{V}}^\top\textbf{T}$ is positive and maximum. By the Cauchy-Schwartz inequality, this is achieved when $\textbf{T}=(\tilde{\textbf{U}}\tilde{\textbf{V}}^\top)^\top$ in which case the diagonal elements are all 1. Hence the closed-form solution of (\ref{tsubproblem1}) is
\begin{equation}\label{tsolve}
\textbf{T}^{k+1}=\tilde{\textbf{V}}\tilde{\textbf{U}}^\top,
\end{equation}
where $\tilde{\textbf{U}}$ and $\tilde{\textbf{V}}$ are the orthogonal matrices obtained by the following SVD:
$$\alpha\textbf{X}_{(3)}^{k+1}(\textbf{Z}_{(3)}^{k+1})^\top+\rho_4{\textbf{T}^k}^\top=\tilde{\textbf{U}}\tilde{\textbf{D}}\tilde{\textbf{V}}^\top.$$

We summary the solving algorithm for NTTNN in Algorithm \ref{MBPAM}.
\begin{algorithm}[t]
\caption{The PAM-based solver for the proposed NTTNN model.}
\begin{algorithmic}
\State \textbf{Input:} The observed $\mathcal{O}\in\mathbb{R}^{n_1\times{n_2}\times{n_3}}$ , index set $\Omega$, the row number $r$ of the transform $\mathbf{T}$, proximal parameters $\alpha$, $\beta$, and $\rho_i(i=1,\cdots,4)$.
\State \textbf{Output:} The recovered third-order tensor $\mathcal{X}$.
\State \textbf{Initialization:}  $\mathcal{X}^{0}$, $\mathcal{Y}^0$, $\mathcal{Z}^0$, \textbf{T}$^0$;\vspace{2mm}
\State \textbf{While} $\frac{\left\|\mathcal{X}^{k+1}-\mathcal{X}^{k}\right\|_{F}}{\left\|\mathcal{X}^{k}\right\|_{F}} \leq   10^{-4}$ \textbf{do}\vspace{1mm}
\State ~~~~~~~~~~Update $\mathcal{X}^{k+1}$ via (\ref{xsolve});
\State ~~~~~~~~~~Update $\mathcal{Y}^{k+1}$ via (\ref{ysolve});
\State ~~~~~~~~~~Update $\mathcal{Z}^{k+1}$ via (\ref{zsolve});
\State ~~~~~~~~~~Update \textbf{T}$^{k+1}$ via (\ref{tsolve});
\State \textbf{end while}
\end{algorithmic}\label{MBPAM}
\end{algorithm}

\subsection{Convergence analysis}
Under the PAM algorithm framework, we establish the global convergence guarantee of Algorithm \ref{MBPAM} to solve (\ref{mymodels3}).
First of all, we denote following functions:
$$f(\mathcal{X},\mathcal{Y},\mathcal{Z},\textbf{T}) = \sum_{i=1}^{r}\left\|\mathbf{Y}_i\right\|_* + \frac{\alpha}{2}\|\mathcal{X}-\mathcal{Z}\times_3\textbf{T}^\top\|_F^2
+\frac{\beta}{2}\|\mathcal{Y}-\phi(\mathcal{Z})\|_F^2+\Phi(\mathcal{X})+\Psi(\textbf{T}),$$
and
$$
f_1(\mathcal{X},\mathcal{Y},\mathcal{Z},\textbf{T})=\frac{\alpha}{2}\|\mathcal{X}-\mathcal{Z}\times_3\textbf{T}^\top\|_F^2
+\frac{\beta}{2}\|\mathcal{Y}-\phi(\mathcal{Z})\|_F^2.
$$
\indent First, we introduce the necessary ingredients used for the convergence analysis.

\begin{definition}\label{KL}
(Kurdyka-\L ojasiewicz property\cite{PAM_convergence_codition}). The function $\psi(x):\mathbb{R}^n\rightarrow\mathbb{R}\cup+\infty$ is said to have the Kurdyka-$\L$ojasiewicz (K-\L) property at $x^*\in\operatorname{dom}(\partial \psi(x))$
if there exist $\eta \in (0,+\infty]$, a neighborhood $U$ of $x^*$ and a continuous concave function $\psi(x):[0,\eta)\rightarrow \mathbb{R}_+$ satisfy:
\begin{itemize}
  \item $\psi(0)=0$,
  \item $\psi(x)$ is $C^1$ on $(0,\eta]$,
  \item for any $x \in (0,\eta),\psi'(x)>0$,
  \item for any $x$ in $U\cap[\psi(x^*)<\psi(x)<\psi(x^*)+\eta]$
\end{itemize}
The proper lower semi-continuous functions are called K-\L~functions, if they satisfy K-\L~property at each point of $\operatorname{dom}(\partial \psi(x))$.
\end{definition}

\begin{definition}\label{semi-algebraic}
(Semi-algebraic set and semi-algebraic function\cite{PAM_convergence_codition})
If there exists a series of real polynomial functions $m_{ij}$ and $n_{ij}$ satisfy
$S=\cap_j\cup_i\{x\in\mathbb{R}^n:m_{ij}(x)=0,n_{ij}(x)<0\},$ then the subset $S \in \mathbb{R}$ is a semi-algebraic set.
If the graph $\{(x,y)\in \mathbb{R}^n\times\mathbb{R}, f(x) = y\}$ of the function $f$ is a
semi-algebraic set, then $f$ is a semi-algebraic function.
\end{definition}
\begin{remark}\label{remark2}
A semi-algebraic real valued function $f$ satisfies K-\L~property at each $x \in \operatorname{dom}(f)$, i.e., $f$ is a K-\L~function.
\end{remark}

 \begin{lemma}
 (Sufficient decrease lemma). For any $\rho_i>0$ $(i=1,2,3,4)$, the sequence $\{\mathcal{X}^k,\mathcal{Y}^k,\mathcal{Z}^k,\mathbf{T}^k\}$ that is generated by (\ref{sequence}) satisfies the following formulae:
 \begin{equation}
\left\{\begin{aligned}
&f(\mathcal{X}^{k+1},\mathcal{Y}^k,\mathcal{Z}^k,\mathbf{T}^k)+\frac{\rho_1}{2}\|\mathcal{X}^{k+1}-\mathcal{X}^k\|_F^2 \leq f(\mathcal{X}^k,\mathcal{Y}^k,\mathcal{Z}^k,\mathbf{T}^k),\\
&f(\mathcal{X}^{k+1},\mathcal{Y}^{k+1},\mathcal{Z}^k,\mathbf{T}^k)+\frac{\rho_2}{2}\|\mathcal{Y}^{k+1}-\mathcal{Y}^k\|_F^2 \leq f(\mathcal{X}^{k+1},\mathcal{Y}^k,\mathcal{Z}^k,\mathbf{T}^k),\\
&f(\mathcal{X}^{k+1},\mathcal{Y}^{k+1},\mathcal{Z}^{k+1},\mathbf{T}^k)+\frac{\rho_3}{2}\|\mathcal{Z}^{k+1}-\mathcal{Z}^k\|_F^2 \leq f(\mathcal{X}^{k+1},\mathcal{Y}^{k+1},\mathcal{Z}^k,\mathbf{T}^k),\\
&f(\mathcal{X}^{k+1},\mathcal{Y}^{k+1},\mathcal{Z}^{k+1},\mathbf{T}^{k+1})+\frac{\rho_4}{2}\|\mathbf{T}^{k+1}-\mathbf{T}^k\|_F^2 \leq f(\mathcal{X}^{k+1},\mathcal{Y}^{k+1},\mathcal{Z}^{k+1},\mathbf{T}^k).
\end{aligned}\right.
\end{equation}\label{descentlemma}
\end{lemma}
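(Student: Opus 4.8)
The plan is to obtain each of the four inequalities directly from the defining optimality of the corresponding proximal subproblem in (\ref{sequence}). The governing observation is that every block update augments $f$ with a proximal penalty of the form $\frac{\rho_i}{2}\|\cdot-(\cdot)^k\|_F^2$ that vanishes precisely at the previous iterate, so comparing the value at the computed minimizer against the value at the previous iterate yields the asserted decrease with no further estimation required.

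First I would treat the $\mathcal{X}$-update. Since $\mathcal{X}^{k+1}$ belongs to $\argmin_{\mathcal{X}}\{f(\mathcal{X},\mathcal{Y}^k,\mathcal{Z}^k,\mathbf{T}^k)+\frac{\rho_1}{2}\|\mathcal{X}-\mathcal{X}^k\|_F^2\}$, the value of this augmented objective at $\mathcal{X}^{k+1}$ is no larger than its value at the feasible candidate $\mathcal{X}=\mathcal{X}^k$. Evaluating at $\mathcal{X}^k$ makes the proximal term $\frac{\rho_1}{2}\|\mathcal{X}^k-\mathcal{X}^k\|_F^2=0$, whence
$$f(\mathcal{X}^{k+1},\mathcal{Y}^k,\mathcal{Z}^k,\mathbf{T}^k)+\frac{\rho_1}{2}\|\mathcal{X}^{k+1}-\mathcal{X}^k\|_F^2 \leq f(\mathcal{X}^k,\mathcal{Y}^k,\mathcal{Z}^k,\mathbf{T}^k),$$
which is exactly the first inequality. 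The remaining three inequalities follow by the identical comparison applied successively to the $\mathcal{Y}$-, $\mathcal{Z}$-, and $\mathbf{T}$-subproblems, each time testing the minimizer against the corresponding previous iterate and using that the attached proximal term vanishes there; the only bookkeeping is to hold the already-updated blocks $\mathcal{X}^{k+1}$, $\mathcal{Y}^{k+1}$, $\mathcal{Z}^{k+1}$ fixed in the subproblems where they appear as arguments.

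Since the argument is a one-line comparison for each block, there is no serious analytical obstacle. The only point requiring care is that each $\argmin$ be nonempty so that $(\cdot)^{k+1}$ is well-defined; this is guaranteed by the closed-form solutions derived for the four subproblems in (\ref{xsolve}), (\ref{ysolve}), (\ref{zsolve}), and (\ref{tsolve}), where the projection-onto-observation update, the SVT update, and the Procrustes-type orthogonal update each produce an explicit minimizer, and the decoupled one-dimensional $\mathcal{Z}$-equations are solvable by Newton's method. Consequently the four descent inequalities hold for every choice of $\rho_i>0$, completing the proof.
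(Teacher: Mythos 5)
Your proposal is correct and follows essentially the same route as the paper, which likewise derives each inequality from the optimality of $\mathcal{X}^{k+1}$, $\mathcal{Y}^{k+1}$, $\mathcal{Z}^{k+1}$, and $\mathbf{T}^{k+1}$ in their respective subproblems of (\ref{sequence}); your version in fact spells out the comparison against the previous iterate (where the proximal term vanishes) more explicitly than the paper does, and your remark on nonemptiness of the $\argmin$ via the closed-form updates is a sound additional precaution.
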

\begin{proof} Let $\mathcal{X}^{k+1}$, $\mathcal{Y}^{k+1}$, $\mathcal{Z}^{k+1}$, and $\textbf{T}^{k+1}$ be optimal solutions of the corresponding subproblem in (\ref{sequence}), then we have
$$
\left\{\begin{aligned}
&f(\mathcal{X}^{k+1},\mathcal{Y}^k,\mathcal{Z}^k,\mathbf{T}^k)+\frac{\rho_1}{2}\|\mathcal{X}^{k+1}-\mathcal{X}^k\|_F^2 \leq f(\mathcal{X}^k,\mathcal{Y}^k,\mathcal{Z}^k,\mathbf{T}^k),\\
&f(\mathcal{X}^{k+1},\mathcal{Y}^{k+1},\mathcal{Z}^k,\mathbf{T}^k)+\frac{\rho_2}{2}\|\mathcal{Y}^{k+1}-\mathcal{Y}^k\|_F^2 \leq f(\mathcal{X}^{k+1},\mathcal{Y}^k,\mathcal{Z}^k,\mathbf{T}^k),\\
&f(\mathcal{X}^{k+1},\mathcal{Y}^{k+1},\mathcal{Z}^{k+1},\mathbf{T}^k)+\frac{\rho_3}{2}\|\mathcal{Z}^{k+1}-\mathcal{Z}^k\|_F^2 \leq f(\mathcal{X}^{k+1},\mathcal{Y}^{k+1},\mathcal{Z}^k,\mathbf{T}^k),\\
&f(\mathcal{X}^{k+1},\mathcal{Y}^{k+1},\mathcal{Z}^{k+1},\mathbf{T}^{k+1})+\frac{\rho_4}{2}\|\mathbf{T}^{k+1}-\mathbf{T}^k\|_F^2 \leq f(\mathcal{X}^{k+1},\mathcal{Y}^{k+1},\mathcal{Z}^{k+1},\mathbf{T}^k).
\end{aligned}\right.
$$
The proof of the sufficient decrease lemma is completed.
\end{proof}
\begin{lemma}(Relative error lemma). Assuming that $\phi(\cdot)$ is a real analytic function, and continuous on its domain with Lipschitz
continuous on any bounded set. Then, the sequence $\{\mathcal{X}^k,\mathcal{Y}^k,\mathcal{Z}^k,\mathbf{T}^k\}$ obtained by (\ref{sequence}) is bounded, and for any $\rho_i>0$ $(i=1,2,3,4)$, there exist $V_i^{t+1}$ such that $\{\mathcal{X}^k,\mathcal{Y}^k,\mathcal{Z}^k,\mathbf{T}^k\}$ satisfies the following formulae:
\begin{equation}
\left\{\begin{aligned}
&\|V_{1}^{k+1}+\nabla_\mathcal{X}f_1(\mathcal{X}^{k+1},\mathcal{Y}^k,\mathcal{Z}^k,\mathbf{T}^k)\|_F\leq \rho_1\|\mathcal{X}^{k+1}-\mathcal{X}^k\|_F,\\
&\|V_{2}^{k+1}+\nabla_\mathcal{Y}f_1(\mathcal{X}^{k+1},\mathcal{Y}^{k+1},\mathcal{Z}^k,\mathbf{T}^k)\|_F\leq \rho_2\|\mathcal{Y}^{k+1}-\mathcal{Y}^k\|_F,\\
&\|V_{3}^{k+1}+\nabla_\mathcal{Z}f_1(\mathcal{X}^{k+1},\mathcal{Y}^{k+1},\mathcal{Z}^{k+1},\mathbf{T}^k)\|_F\leq \rho_3\|\mathcal{Z}^{k+1}-\mathcal{Z}^k\|_F,\\
&\|V_{4}^{k+1}+\nabla_\mathbf{T}f_1(\mathcal{X}^{k+1},\mathcal{Y}^{k+1},\mathcal{Z}^{k+1},\mathbf{T}^{k+1})\|_F\leq \rho_4\|\mathbf{T}^{k+1}-\mathbf{T}^k\|_F.
\end{aligned}\right.\label{Ni}
\end{equation}\label{H2}
\end{lemma}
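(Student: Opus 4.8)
The plan is to establish the two assertions separately: boundedness of the iterates, and the relative-error (subgradient) bounds (\ref{Ni}). The relative-error bounds are the more mechanical part and rest on Fermat's rule applied to each proximal subproblem in (\ref{sequence}); the boundedness is where the structure of the model must really be used, and I expect it to be the main obstacle.

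For boundedness, I would start from the sufficient decrease lemma (Lemma \ref{descentlemma}), which shows that $\{f(\mathcal{X}^k,\mathcal{Y}^k,\mathcal{Z}^k,\mathbf{T}^k)\}$ is nonincreasing, hence bounded above by its initial value $f^0$. Since every summand of $f$ is nonnegative, each is controlled by $f^0$. From $\Psi(\mathbf{T}^k)<\infty$ we get $\mathbf{T}^k(\mathbf{T}^k)^\top=\mathbf{I}_{r\times r}$, so $\|\mathbf{T}^k\|_F^2=\operatorname{Tr}(\mathbf{T}^k(\mathbf{T}^k)^\top)=r$ and $\{\mathbf{T}^k\}$ is bounded; from $\sum_i\|\mathbf{Y}_i^k\|_*\le f^0$ together with $\|\cdot\|_F\le\|\cdot\|_*$ we get $\{\mathcal{Y}^k\}$ bounded; and $\frac{\alpha}{2}\|\mathcal{X}^k-\mathcal{Z}^k\times_3(\mathbf{T}^k)^\top\|_F^2\le f^0$ bounds the coupling residual. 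The next step would exploit the semi-orthogonality of $\mathbf{T}^k$: since $\mathbf{T}^k(\mathbf{T}^k)^\top=\mathbf{I}_{r\times r}$, the mode-3 product by $(\mathbf{T}^k)^\top$ preserves the Frobenius norm, i.e. $\|\mathcal{Z}^k\times_3(\mathbf{T}^k)^\top\|_F=\|\mathcal{Z}^k\|_F$, so the bounded residual forces $\big|\,\|\mathcal{X}^k\|_F-\|\mathcal{Z}^k\|_F\,\big|\le C$. Thus $\{\mathcal{X}^k\}$ is bounded if and only if $\{\mathcal{Z}^k\}$ is, and it remains to bound one of them.

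This last bound is the crux, and the difficulty is genuine when $\phi$ is a saturating activation (e.g. Sigmoid or Tanh): then $\phi(\mathcal{Z})$ is bounded regardless of $\mathcal{Z}$, so the term $\frac{\beta}{2}\|\mathcal{Y}-\phi(\mathcal{Z})\|_F^2$ gives no coercivity in $\mathcal{Z}$, and $f$ is not coercive on the feasible set. I would therefore argue directly from the update rules rather than from coercivity of $f$: the closed form (\ref{xsolve}) writes $\mathcal{X}^{k+1}$ on $\Omega^c$ as a convex combination of $(\mathcal{Z}^k\times_3(\mathbf{T}^k)^\top)_{\Omega^c}$ and $\mathcal{X}^k_{\Omega^c}$ (with the $\Omega$ block frozen at $\mathcal{O}_\Omega$), while the entrywise optimality (\ref{zsolve}) writes $\mathcal{Z}^{k+1}$ as a convex combination of $\mathbf{T}^k\mathbf{X}^{k+1}_{(3)}$ and $\mathbf{Z}^k_{(3)}$ perturbed by a term that is bounded because $\phi$ and $\phi'$ are Lipschitz, hence bounded, on the relevant range. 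Combining these two convex-combination recursions with the norm-preservation above and the fixed observed block $\mathcal{O}_\Omega$, I would set up a coupled recursion in $(\|\mathcal{X}^k\|_F,\|\mathcal{Z}^k\|_F)$ and show it remains bounded; I expect this recursion step, and in particular controlling the bounded perturbation coming from the nonlinearity, to be the main technical hurdle.

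For the relative-error bounds, the plan is routine. Because $\phi$ is real analytic it is $C^\infty$, so $f_1$ is continuously differentiable and $f$ splits as $f=f_1+\sum_i\|\mathbf{Y}_i\|_*+\Phi(\mathcal{X})+\Psi(\mathbf{T})$ into a smooth part plus separable nonsmooth/indicator parts. Since $\mathcal{X}^{k+1},\mathcal{Y}^{k+1},\mathcal{Z}^{k+1},\mathbf{T}^{k+1}$ minimize their respective subproblems in (\ref{sequence}), Fermat's rule yields, for each block, a subgradient of the corresponding nonsmooth part that cancels the smooth gradient and the proximal term. Concretely, there exist $V_1^{k+1}\in\partial\Phi(\mathcal{X}^{k+1})$, $V_2^{k+1}\in\partial\big(\sum_i\|\mathbf{Y}_i\|_*\big)(\mathcal{Y}^{k+1})$, $V_3^{k+1}=0$ (the $\mathcal{Z}$ block carries no nonsmooth term), and $V_4^{k+1}\in\partial\Psi(\mathbf{T}^{k+1})$, the last being an element of the limiting normal cone to the Stiefel manifold $\{\mathbf{T}:\mathbf{T}\mathbf{T}^\top=\mathbf{I}_{r\times r}\}$, such that $V_i^{k+1}+\nabla f_1(\cdot)=-\rho_i(\,\cdot^{k+1}-\cdot^k\,)$ at the arguments displayed in (\ref{Ni}). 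Taking Frobenius norms turns each identity into the stated inequality, in fact an equality. The only points needing care are that Fermat's rule be invoked with the limiting subdifferential for the nonconvex $\Psi$, and that the local Lipschitz continuity of $\phi$ and $\phi'$, together with the boundedness just established, keep $\nabla f_1$ uniformly bounded along the sequence, which is precisely what the subsequent global convergence argument will require.
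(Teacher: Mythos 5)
Your relative-error half is exactly the paper's argument: Fermat's rule applied blockwise to (\ref{sequence}) produces $V_1^{k+1}\in\partial\Phi(\mathcal{X}^{k+1})$, $V_2^{k+1}\in\partial\bigl(\sum_{i=1}^r\|\mathbf{Y}_i\|_*\bigr)(\mathcal{Y}^{k+1})$, $V_4^{k+1}\in\partial\Psi(\mathbf{T}^{k+1})$, and $V_3^{k+1}=0$ from the smooth one-dimensional optimality conditions of (\ref{zsolve}), after which the identity $V_i^{k+1}+\nabla f_1(\cdot)=-\rho_i(\,\cdot^{k+1}-\cdot^{k})$ yields each line of (\ref{Ni}), in fact with equality; your extra care in invoking the limiting subdifferential for the nonconvex $\Psi$ is a point the paper passes over silently, and it changes nothing substantive. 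That half of your proposal is complete and matches the paper.

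The genuine gap is in the boundedness half, and you named it yourself without closing it. The coupled convex-combination recursion you sketch cannot work as stated: writing $a_k=\|\mathcal{X}^k\|_F$ and $b_k=\|\mathcal{Z}^k\|_F$, your updates give at best $a_{k+1}\le(1-\lambda)a_k+\lambda b_k+c_1$ and $b_{k+1}\le\mu a_{k+1}+(1-\mu)b_k+c_2$ with $\lambda=\alpha/(\alpha+\rho_1)$ and $\mu=\alpha/(\alpha+\rho_3)$; the rows of the induced iteration matrix sum to one, so its spectral radius is $1$ and the bounded forcing coming from the saturating nonlinearity can accumulate — the resulting estimate $m_{k+1}\le m_k+c$ for $m_k=\max(a_k,b_k)$ permits linear growth. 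Nor do the square-summable increments supplied by Lemma \ref{descentlemma} rescue it, since square-summable steps do not preclude divergence. So your proposal does not establish the first assertion of the lemma. That said, your diagnosis of why the easy route fails is correct, and it cuts directly against the paper's own proof: the paper derives boundedness from four single-block limits $\lim f=+\infty$ together with monotonicity of $f$, but blockwise coercivity does not imply joint coercivity, and for saturating $\phi$ (Tanh and Sigmoid, precisely the choices used in the experiments) joint coercivity genuinely fails. Indeed, whenever the linear map $\mathcal{Z}\mapsto(\mathcal{Z}\times_3\mathbf{T}^\top)_\Omega$ has a nontrivial kernel (the case at the low sampling rates considered), taking $\mathcal{Z}^t=t\mathcal{W}$ with $\mathcal{W}$ in that kernel and $\mathcal{X}^t$ equal to $\mathcal{O}$ on $\Omega$ and to $\mathcal{Z}^t\times_3\mathbf{T}^\top$ on $\Omega^c$ keeps every term of $f$ bounded while $\|\mathcal{Z}^t\|_F\to\infty$; tellingly, Theorem \ref{theorem4} then re-assumes ``the bounded sequence'' in its hypothesis. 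In short: you exposed a real weakness in the paper's proof of this lemma, but you did not repair it — a correct repair needs an additional ingredient (a coercive term in $\mathcal{Z}$, a restriction to nonsaturating $\phi$, or an explicit a priori bound), which neither your sketch nor the paper supplies.
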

\begin{proof}
Firstly, we prove $\{\mathcal{X}^k,\mathcal{Y}^k,\mathcal{Z}^k,\mathbf{T}^k\}$ obtained by (\ref{sequence}) is bounded.
Since
$$
\begin{aligned}
&\lim_{\|\mathcal{X}\|_F\rightarrow +\infty}\frac{\alpha}{2}\|\mathcal{X}-\mathcal{Z}\times_3\mathbf{T}^\top\|_F=+\infty,~~~
\lim_{\|\mathcal{Y}\|_F\rightarrow +\infty}\sum_{i=1}^{r}\left\|\mathbf{Y}_i\right\|_*=+\infty,\\
&\lim_{\|\mathcal{Z}\|_F\rightarrow +\infty}\frac{\alpha}{2}\|\mathcal{X}-\mathcal{Z}\times_3\mathbf{T}^\top\|_F=+\infty,~~~
\lim_{\|\mathbf{T}\|_F\rightarrow +\infty}\Psi(\mathbf{T})=+\infty,
\end{aligned}
$$
we can respectively obtain
$$
\begin{aligned}
&\lim_{\|\mathcal{X}\|_F\rightarrow +\infty}f(\mathcal{X},\mathcal{Y},\mathcal{Z},\mathbf{T})=+\infty,
\lim_{\|\mathcal{Y}\|_F\rightarrow +\infty}f(\mathcal{X},\mathcal{Y},\mathcal{Z},\mathbf{T})=+\infty,\\
&\lim_{\|\mathcal{Z}\|_F\rightarrow +\infty}f(\mathcal{X},\mathcal{Y},\mathcal{Z},\mathbf{T})=+\infty,
\lim_{\|\mathbf{T}\|_F\rightarrow +\infty}f(\mathcal{X},\mathcal{Y},\mathcal{Z},\mathbf{T})=+\infty.
\end{aligned}
$$
Therefore, we have the conclusion that $f(\mathcal{X}^{k+1},\mathcal{Y}^{k+1},\mathcal{Z}^{k+1},\mathbf{T}^{k+1})$ would approach infinity if $\{\mathcal{X}^k,\mathcal{Y}^k,\mathcal{Z}^k,\mathbf{T}^k\}$ is unbounded, i.e., the sequence $\{\mathcal{X}^k,\mathcal{Y}^k,\mathcal{Z}^k,\mathbf{T}^k\}$ is bounded if
$f(\mathcal{X}^{k+1},\mathcal{Y}^{k+1},\mathcal{Z}^{k+1},$ $\mathbf{T}^{k+1})$ is finite. Thus, we proof $f(\mathcal{X}^{k+1},\mathcal{Y}^{k+1},\mathcal{Z}^{k+1},$ $\mathbf{T}^{k+1})$ is finite in the following. According to \textbf{Lemma} \ref{descentlemma}, we have
$$
\begin{aligned}
f(\mathcal{X}^{k+1},\mathcal{Y}^{k+1},\mathcal{Z}^{k+1},\mathbf{T}^{k+1})&\leq
f(\mathcal{X}^{k+1},\mathcal{Y}^{k+1},\mathcal{Z}^{k+1},\mathbf{T}^{k+1})
+\frac{\rho_1}{2}\|\mathcal{X}^{k+1}-\mathcal{X}^k\|_F^2
+\frac{\rho_2}{2}\|\mathcal{Y}^{k+1}-\mathcal{Y}^k\|_F^2
\\
&+\frac{\rho_3}{2}\|\mathcal{Z}^{k+1}-\mathcal{Z}^k\|_F^2
+\frac{\rho_4}{2}\|\mathbf{T}^{k+1}-\mathbf{T}^k\|_F^2\\
&\leq f(\mathcal{X}^k,\mathcal{Y}^k,\mathcal{Z}^k,\mathbf{T}^k)\\
&\leq f(\mathcal{X}^k,\mathcal{Y}^k,\mathcal{Z}^k,\mathbf{T}^k)
+\frac{\rho_1}{2}\|\mathcal{X}^{k}-\mathcal{X}^{k-1}\|_F^2
+\frac{\rho_2}{2}\|\mathcal{Y}^{k}-\mathcal{Y}^{k-1}\|_F^2\\
&+\frac{\rho_3}{2}\|\mathcal{Z}^{k}-\mathcal{Z}^{k-1}\|_F^2
+\frac{\rho_4}{2}\|\mathbf{T}^{k}-\mathbf{T}^{k-1}\|_F^2\\
&\leq \\
& \cdots \\
&\leq
f(\mathcal{X}^{0},\mathcal{Y}^{0},\mathcal{Z}^{0},\mathbf{T}^{0}),
\end{aligned}
$$
then $f(\mathcal{X}^{k+1},\mathcal{Y}^{k+1},\mathcal{Z}^{k+1},\mathbf{T}^{k+1})$ is finite.

Therefore, we can conclude that $\{\mathcal{X}^k,\mathcal{Y}^k,\mathcal{Z}^k,\mathbf{T}^k\}$ obtained by (\ref{sequence}) is bounded.

Next, let $\mathcal{X}^{k+1}$, $\mathcal{Y}^{k+1}$, $\mathcal{Z}^{k+1}$, and $\mathbf{T}^{k+1}$ be optimal solutions of each subproblem in (\ref{sequence}). For $\mathcal{X}$, $\mathcal{Y}$, and $\mathbf{T}$ subproblems, we have
$$
\left\{
\begin{aligned}
&0 \in \partial \Phi(\mathcal{X}^{k+1}) +\alpha(\mathcal{X}^{k+1}-\mathcal{Z}^k\times_3{\mathbf{T}^{k}}^\top)+\rho_1(\mathcal{X}^{k+1}-\mathcal{X}^k),
\\&0 \in \partial (\sum_{i=1}^{r}\left\|\mathbf{Y}^{k+1}_i\right\|_*) + \beta(\mathcal{Y}^{k+1}-\phi(\mathcal{Z}^k))+\rho_2(\mathcal{Y}^{k+1}-\mathcal{Y}^k),
\\&0 \in \partial \Psi(\mathbf{T}^{k+1})-\alpha(\mathbf{X}^{k+1}_{(3)}-{\mathbf{T}^{k+1}}^\top\mathbf{Z}_{(3)}^{k+1})\mathbf{Z}^{k+1}_{(3)}
+\rho_4(\mathbf{T}^{k+1}-\mathbf{T}^k).
\end{aligned}\right.
$$
Then we can define $V_1$, $V_2$, and $V_4$ as
$$
\left\{
\begin{aligned}
&V_1^{k+1}=-\alpha(\mathcal{X}^{k+1}-\mathcal{Z}^k\times_3{\mathbf{T}^{k}}^\top)-\rho_1(\mathcal{X}^{k+1}-\mathcal{X}^k),
\\&V_2^{k+1} = -\beta(\mathcal{Y}^{k+1}-\phi(\mathcal{Z}^k))-\rho_2(\mathcal{Y}^{k+1}-\mathcal{Y}^k),
\\&V_4^{k+1}= \alpha(\mathbf{X}^{k+1}_{(3)}-{\mathbf{T}^{k+1}}^\top\mathbf{Z}_{(3)}^{k+1})\mathbf{Z}^{k+1}_{(3)}
-\rho_4(\mathbf{T}^{k+1}-\mathbf{T}^k).
\end{aligned}\right.
$$
Additionally, from the subproblem (\ref{zsolve}), we have
$$
0\in(\alpha+\rho_3)(z_{ij}^{k+1}-g_{ij}^k)+\beta\partial \phi(z_{ij}^{k+1})(\phi(z_{ij}^{k+1})-y_{ij}^{k+1}),
$$
thus we can define $V_3^{k+1}=0$.
Since $\phi(\cdot)$ is a real analytic function, and continuous on its domain with Lipschitz continuous on any bounded set, $\nabla f_1$ is Lipschitz continuous on any bounded set.
Since $\{\mathcal{X}^k,\mathcal{Y}^k,\mathcal{Z}^k,\mathbf{T}^k\}$ is bounded and $\nabla f_1$ is Lipschitz continuous on any bounded set, then for any $\rho_i>0$, the following formulae holds:
$$
\left\{\begin{aligned}
&\|V_{1}^{k+1}+\nabla_\mathcal{X}f_1(\mathcal{X}^{k+1},\mathcal{Y}^k,\mathcal{Z}^k,\mathbf{T}^k)\|_F\leq \rho_1\|\mathcal{X}^{k+1}-\mathcal{X}^k\|_F,\\
&\|V_{2}^{k+1}+\nabla_\mathcal{Y}f_1(\mathcal{X}^{k+1},\mathcal{Y}^{k+1},\mathcal{Z}^k,\mathbf{T}^k)\|_F\leq \rho_2\|\mathcal{Y}^{k+1}-\mathcal{Y}^k\|_F,\\
&\|V_{3}^{k+1}+\nabla_\mathcal{Z}f_1(\mathcal{X}^{k+1},\mathcal{Y}^{k+1},\mathcal{Z}^{k+1},\mathbf{T}^k)\|_F\leq \rho_3\|\mathcal{Z}^{k+1}-\mathcal{Z}^k\|_F,\\
&\|V_{4}^{k+1}+\nabla_\mathbf{T}f_1(\mathcal{X}^{k+1},\mathcal{Y}^{k+1},\mathcal{Z}^{k+1},\mathbf{T}^{k+1})\|_F\leq \rho_4\|\mathbf{T}^{k+1}-\mathbf{T}^k\|_F.
\end{aligned}\right.
$$
Therefore, the proof of the relative error lemma is completed.
\end{proof}

\par Next, we give the theoretical convergence guarantee of Algorithm \ref{MBPAM}.
\begin{theorem}\label{theorem4}
Assuming that the $\phi(\cdot)$ is a real analytic function and continuous on its domains with Lipschitz continuous on any bounded set, the bounded sequence $\{\mathcal{X}^k,\mathcal{Y}^k,\mathcal{Z}^k,\mathbf{T}^k\}$ obtained by Algorithm \ref{MBPAM} converges to a critical point of $f$.
\end{theorem}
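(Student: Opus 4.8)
The plan is to invoke the abstract convergence framework for proximal alternating minimization schemes \cite{PAM,PAM_convergence_codition}, which guarantees that a bounded sequence converges to a critical point of $f$ once four ingredients are in place: (i) a sufficient decrease property, (ii) a relative error (subgradient) bound, (iii) a continuity condition relating function values along a convergent subsequence, and (iv) that $f$ is a Kurdyka--\L ojasiewicz (K-\L) function. Conditions (i) and (ii) are exactly the content of the sufficient decrease lemma (Lemma~\ref{descentlemma}) and the relative error lemma (Lemma~\ref{H2}), the latter of which also established boundedness of $\{\mathcal{X}^k,\mathcal{Y}^k,\mathcal{Z}^k,\mathbf{T}^k\}$. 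Hence the remaining work reduces to verifying (iii) and (iv) and then assembling everything.

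For the continuity condition (iii), I would proceed as follows. Boundedness guarantees a convergent subsequence $\{\mathcal{X}^{k_j},\mathcal{Y}^{k_j},\mathcal{Z}^{k_j},\mathbf{T}^{k_j}\}\to(\mathcal{X}^*,\mathcal{Y}^*,\mathcal{Z}^*,\mathbf{T}^*)$. By Lemma~\ref{descentlemma} the value sequence $\{f(\mathcal{X}^k,\mathcal{Y}^k,\mathcal{Z}^k,\mathbf{T}^k)\}$ is nonincreasing and bounded below, hence convergent; combining this with lower semicontinuity of $f$ and the exactness of each subproblem solve yields $f(\mathcal{X}^{k_j},\mathcal{Y}^{k_j},\mathcal{Z}^{k_j},\mathbf{T}^{k_j})\to f(\mathcal{X}^*,\mathcal{Y}^*,\mathcal{Z}^*,\mathbf{T}^*)$, which is precisely (iii). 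Summing the four inequalities of Lemma~\ref{descentlemma} over $k$ and telescoping further gives $\sum_k\big(\|\mathcal{X}^{k+1}-\mathcal{X}^k\|_F^2+\cdots+\|\mathbf{T}^{k+1}-\mathbf{T}^k\|_F^2\big)<\infty$, so successive differences vanish, which is the fact needed to identify every cluster point as a critical point.

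The main obstacle is verifying the K-\L property (iv) for the full objective $f$. My strategy is to decompose $f$ into its terms: the nuclear-norm term $\sum_{i=1}^r\|\mathbf{Y}_i\|_*$ is semi-algebraic, the coupling term $\frac{\alpha}{2}\|\mathcal{X}-\mathcal{Z}\times_3\mathbf{T}^\top\|_F^2$ is polynomial (hence semi-algebraic) in the entries, and $\Phi$, $\Psi$ are the indicators of the semi-algebraic affine set $\{\mathcal{X}_\Omega=\mathcal{O}_\Omega\}$ and of the real-algebraic variety $\{\mathbf{T}\mathbf{T}^\top=\mathbf{I}_{r\times r}\}$, respectively, so both are semi-algebraic; by Remark~\ref{remark2} each of these is a K-\L function. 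The only non-semi-algebraic piece is $\frac{\beta}{2}\|\mathcal{Y}-\phi(\mathcal{Z})\|_F^2$, which, under the standing assumption that $\phi$ is real analytic, is a real-analytic function of $(\mathcal{Y},\mathcal{Z})$. The crux is that semi-algebraic functions and restricted real-analytic functions are all definable in the common o-minimal structure $\mathbb{R}_{\mathrm{an}}$; since definable functions are closed under finite sums, $f$ is definable in $\mathbb{R}_{\mathrm{an}}$, and every function definable in an o-minimal structure satisfies the K-\L property at each point of its domain \cite{PAM_convergence_codition}. As the iterates remain in a bounded region, restricted analyticity of $\phi$ is enough to invoke this on the relevant set.

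Finally, with (i)--(iv) verified, the abstract PAM convergence theorem applies directly: the whole sequence $\{\mathcal{X}^k,\mathcal{Y}^k,\mathcal{Z}^k,\mathbf{T}^k\}$, not merely a subsequence, converges, and its limit $(\mathcal{X}^*,\mathcal{Y}^*,\mathcal{Z}^*,\mathbf{T}^*)$ satisfies $0\in\partial f$, i.e.\ it is a critical point of $f$. I expect the delicate step to be the o-minimality argument handling the nonlinear term $\phi$; the other verifications either are already proved in the two preceding lemmas or follow from routine lower-semicontinuity and telescoping arguments.
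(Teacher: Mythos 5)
Your proposal is correct and follows the same overall architecture as the paper: both reduce the theorem to the hypotheses of the abstract PAM convergence result (Theorem 6.2 in \cite{PAM_convergence_codition}), supply the sufficient decrease and relative error conditions via Lemma \ref{descentlemma} and Lemma \ref{H2} (including the boundedness argument), and verify the K-\L{} property by splitting $f$ into the nuclear norm, the quadratic coupling term, the two indicator functions, and the nonlinear fidelity term. The genuine divergence is in how the K-\L{} property of the \emph{sum} is established: the paper verifies each summand separately --- citing semi-algebraicity \cite{semi-algebraic} with Remark \ref{remark2} for four of the terms and the proof of Lemma 6 in \cite{nonlinearKL} for $\frac{\beta}{2}\|\mathcal{Y}-\phi(\mathcal{Z})\|_F^2$ --- and then asserts that $f$ is K-\L{}, whereas you argue that all terms are definable in a common o-minimal structure ($\mathbb{R}_{\mathrm{an}}$, using restricted analyticity of $\phi$ on a compact box containing the bounded iterates) and invoke closure of definable functions under finite sums. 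Your route is actually the more careful one at this step, since a sum of K-\L{} functions need not be K-\L{} in general, so term-by-term verification alone does not formally deliver the property for $f$; definability in a fixed o-minimal structure is precisely the standard mechanism that closes this gap, and your restriction to a bounded region correctly handles the fact that an unrestricted real analytic $\phi$ need not be globally definable in $\mathbb{R}_{\mathrm{an}}$. You also explicitly verify the continuity condition along convergent subsequences (condition (H3) of the abstract framework, via lower semicontinuity plus exactness of the proximal subproblem solves) and the telescoping summability of successive differences, both of which the paper leaves implicit in its appeal to \cite{PAM_convergence_codition}; this costs you a little extra length but makes the application of the abstract theorem airtight.
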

\begin{proof}
To prove $\{\mathcal{X}^k,\mathcal{Y}^k,\mathcal{Z}^k,\mathbf{T}^k\}$ globally
converges to a critical point of $f(\mathcal{X}^k,\mathcal{Y}^k,\mathcal{Z}^k,\mathbf{T}^k)$, we require the following three key conditions:\\
 \indent $\bullet$ $f(\mathcal{X}^k,\mathcal{Y}^k,\mathcal{Z}^k,\mathbf{T}^k)$ is a proper lower semi-continuous function.\\
 \indent $\bullet$ $f(\mathcal{X}^k,\mathcal{Y}^k,\mathcal{Z}^k,\mathbf{T}^k)$ satisfies the K-\L{} property at each $\{\mathcal{X}^k,\mathcal{Y}^k,\mathcal{Z}^k,\mathbf{T}^k\}$ $\in$ $\operatorname{dom}(f)$.\\
 \indent $\bullet$ The sequence $\{\mathcal{X}^k,\mathcal{Y}^k,\mathcal{Z}^k,\mathbf{T}^k\}_{k\in \mathbb{N}}$ satisfies the sufficient decrease and relative error conditions.
 \par Firstly, it can be verified that $\frac{\alpha}{2}\|\mathcal{X}-\mathcal{Z}\times_3\textbf{T}^\top\|_F^2$
 and $\frac{\beta}{2}\|\mathcal{Y}-\phi(\mathcal{Z})\|_F^2$ are $C^1$ functions with locally Lipschitz continuous gradient, and $\Phi(\mathcal{X})$, $\Psi(\textbf{T})$, and $\sum_{i=1}^{r}\left\|\mathbf{Y}_i\right\|_*$ are proper lower semi-continuous. Therefore, $f(\mathcal{X},\mathcal{Y},\mathcal{Z},\mathbf{T})$ is the proper and lower semi-continuous function.
 \par Secondly, we prove $f$ satisfies K-\L~property at each point by verifying that the each part of $f(\mathcal{X},\mathcal{Y},\mathcal{Z},\mathbf{T})$ is the K-\L~function, where
 $$
 f(\mathcal{X},\mathcal{Y},\mathcal{Z},\mathbf{T})=\sum_{i=1}^{r}\left\|\mathbf{Y}_i\right\|_*
 +\frac{\alpha}{2}\|\mathcal{X}-\mathcal{Z}\times_3\textbf{T}^\top\|_F^2
 +\frac{\beta}{2}\|\mathcal{Y}-\phi(\mathcal{Z})\|_F^2
 +\Phi(\mathcal{X})+\Psi(\textbf{T}).
 $$
 Then, we verify each part as follows:

 (1)The matrix nuclear norm term $\sum_{i=1}^{r}\left\|\mathbf{Y}_i\right\|_*$ is a semi-algebraic function \cite{semi-algebraic}. According to \textbf{Remark} \ref{remark2}, $\sum_{i=1}^{r}\left\|\mathbf{Y}_i\right\|_*$ is a K-\L~function.

 (2)The Frobenius norm function $\frac{\alpha}{2}\|\mathcal{X}-\mathcal{Z}\times_3\textbf{T}^\top\|_F^2$
 is semi-algebraic \cite{semi-algebraic}. According to \textbf{Remark} \ref{remark2}, $\frac{\alpha}{2}\|\mathcal{X}-\mathcal{Z}\times_3\textbf{T}^\top\|_F^2$ is a K-\L~function.

 (3)$\Psi(\mathcal{X})$ and $\Phi(\mathbf{T})$ are semi-algebraic functions, since they are indicator functions with semi-algebraic sets \cite{semi-algebraic}. According to \textbf{Remark} \ref{remark2}, $\Psi(\mathcal{X})$ and $\Phi(\mathbf{T})$ are K-\L~functions.

 (4) According to the proof of Lemma 6 in \cite{nonlinearKL}, the nonlinear function $\frac{\beta}{2}\|\mathcal{Y}-\phi(\mathcal{Z})\|_F^2$ is a K-\L~function.

 Therefore, the function $f(\mathcal{X},\mathcal{Y},\mathcal{Z},\mathbf{T})$ is a K-\L~function.

 \par Thirdly, according to \textbf{Lemma} \ref{descentlemma} and \textbf{Lemma} \ref{H2}, the sequence $\{\mathcal{X}^k,\mathbf{Y}^k,\mathcal{Z}^k,\mathbf{T}^k\}$ satisfies the sufficient decrease and relative error conditions.

In summary, combining the three key conditions, the proposed algorithm satisfies Theorem 6.2 in \cite{PAM_convergence_codition}, thus, we can conclude that the sequence $\{\mathcal{X}^k,\mathcal{Y}^k,\mathcal{Z}^k,\mathbf{T}^k\}$ generated by Algorithm \ref{MBPAM} converges to a critical point of $f$.
\end{proof}

%
%
\section{Numerical Experiments}\label{section4}
In this part, we conduct numerical experiments on HSIs, MSIs, and videos for LRTC to test the performance of the proposed model. All experimental tensor data are prescaled to [0,1].
All numerical experiments
are implemented in Windows 10 64-bit and MATLAB R2019a on a desktop computer with an Intel(R) Core(TM) i7-8700K
CPU at 3.70 GHz with 32GB memory of RAM.

We compare the proposed method with four state-of-the-art methods, including t-SVD baseline method TNN \cite{zhangtnn}, DCT-based TNN method DCT-TNN \cite{DCT_Lu}, transform-based TNN method TTNN \cite{unitary_transform}, and dictionary-based TNN method DTNN \cite{DTNN}. For the compared methods, we make efforts to achieve their best result according to the authors suggestions. For our method, we set the proximal parameters $\rho_i=0.001$ $(i=1,2,3,4)$, the penalty parameters $\alpha$ and $\beta$ are selected from \{1,10,100\}.
For easy comparison, we use the hyperbolic tangent (Tanh) function as the nonlinear function $\phi(\cdot)$, i.e.,
$$
\phi(x)=\frac{e^x-e^{-x}}{e^x+e^{-x}}.
$$
Please see the comparison of different nonlinear functions in section \ref{nonlinearcomparison}.

Since the proposed NTTNN model is highly nonlinear and nonconvex, it is significant for our algorithm to employ an efficient initialization.
To efficiently obtain $\mathcal{X}^0$, we use a simple linear interpolation strategy, which is also used in \cite{chazhi}, for TTNN, DTNN, and our method.
The initialization for transform $\mathbf{T}$ is obtained from the left-singular vectors $\mathbf{U}$ of the SVD of $\mathbf{X}_{(3)}^0$, i.e., $\mathbf{T}^0=\mathbf{U}(:,1:r)^\top$.
Then, $\mathcal{Z}^0$ can be obtained by $\mathcal{Z}^{(0)}=\mathtt{fold}_3(\mathbf{Z}^0_{(3)})=\mathtt{fold}_3({\mathbf{T}^0}\mathbf{X}^0_{(3)})$, and $\mathcal{Y}^0=\phi(\mathcal{Z}^0)$.

The quality of recovered images is measured by the peak signal-to-noise ratio (PSNR) \cite{PSNRSSIM}, the structural similarity index (SSIM) \cite{PSNRSSIM}, and
the spectral angle mapper (SAM) \cite{SAM}. The PSNR and SSIM are defined as
$$
\text{PSNR}=10\log_{10}\frac{\text{MAX}^2_{\mathbf{X},\mathbf{X}^*}}{\|\mathbf{X}-\mathbf{X}^*\|_F^2}
$$
and
$$
\text{SSIM}=\frac{(2\mu_\mathbf{X}\mu_{\mathbf{X}^*})(2\sigma_{\mathbf{XX}^*}+c_2)}
{(\mu^2_\mathbf{X}+\mu^2_{\mathbf{X}^*}+c_1)(\sigma^2_\mathbf{X}+\sigma^2_{\mathbf{X}^*}+c_2)},
$$
respectively, where, $\mathbf{X}^*$ is the true image, $\mathbf{X}$ is the recovered image, $\text{MAX}_{\mathbf{X},\mathbf{X}^*}$ is the maximum pixel value of the images $\mathbf{X}$ and $\mathbf{X}^*$, $\mu_\mathbf{X}$ and $\mu_{\mathbf{X}^*}$ are the mean values of images
$\mathbf{X}$ and $\mathbf{X}^*$, $\sigma_\mathbf{X}$ and $\sigma_{\mathbf{X}^*}$ are the standard variances of $\mathbf{X}$ and $\mathbf{X}^*$, respectively, $\sigma_{\mathbf{XX}^*}$ is the covariance of $\mathbf{X}$ and $\mathbf{X}^*$, and $c_1$ and $c_2$ are positive constants. The SAM is defined as
$$
\text{SAM}=\cos^{-1}\frac{\sum_{i=1}^{n_1n_2}x_ix_i^*}
{(\sum_{i=1}^{n_1n_2}{x_i}^2)^{\frac{1}{2}}(\sum_{i=1}^{n_1n_2}{x^*_i}^2)^{\frac{1}{2}}},
$$
where $x_i$ and $x_i^*$ are pixel of $\mathbf{X}$ and $\mathbf{X}^*$, respectively.
By calculating average PSNR, SSIM and SAM values for all bands, we obtain PSNR, SSIM, and SAM values of a higher-order tensor.
Higher PSNR/SSIM values and lower SAM values indicate better reconstructions.

For all the methods, the relative error of the tensor $\mathcal{X}$ between two successive iterations defined by
$$
\frac{\|\mathcal{X}^{k+1}-\mathcal{X}^k\|_F}{\|\mathcal{X}^k\|_F}\leq 10^{-4}
$$
as the stopping criterion.
\subsection{Experiments on HSIs Data}
In this subsection, we use a sub-image of $\mathit{Washington~DC~Mall}(\mathit{WDC~Mall})$ of size $256\times{256}\times{100}$ and a sub-image of $\mathit{Pavia~City}$ of size $200\times{200}\times{80}$ to evaluate the performance of the proposed method.
Since the high redundancy between HSI's slices, we evaluate the performance of the proposed method on HSIs for extremely low sample ratios (SRs) 1\%, 5\%, and 10\%.

 Table \ref{HSItable} lists the numerical results by different methods, where the best results for each data are highlighted in bold. It can be observed that the proposed NTTNN consistently outperforms the compared methods in terms of PSNR, SSIM, and SAM values on all cases.

Fig. \ref{HSIpic} shows the recovered results of one band and the spectrum profiles of $\mathit{WDC~Mall}$ and $\mathit{Pavia~City}$ by different methods for SR = 1\%.
From the visual comparison, our method outperforms other compared methods in preserving image structures and details, e.g., the building in the zoom-in regions of $\mathit{WDC~Mall}$.
Moreover, NTTNN gives the closest spectrum profiles than those of other compared methods, which demonstrates that the nonlinear transform plays an important role in the spectrum profile recovery.

\begin{table}[htp!]
\centering
\renewcommand\arraystretch{1.5}
\setlength{\tabcolsep}{1.4mm}
\caption{The PSNR, SSIM, and SAM values of the recovered HSIs by different methods for different SRs.}\vspace{0.5cm}
\begin{tabular}{c|c|lll|lll|lll}
\Xhline{1.5pt}
\multirow{2}{*}{Data Index} & \multirow{2}{*}{methods} & \multicolumn{3}{c|}{SR=1\%} & \multicolumn{3}{c|}{SR=5\%} & \multicolumn{3}{c}{SR=10\%} \\ \cline{3-11}
                            &                          & PSNR    & SSIM    & SAM     & PSNR    & SSIM    & SAM     & PSNR     & SSIM    & SAM     \\ \Xhline{1.5pt}
\multirow{6}{*}{$\mathit{WDC~Mall}$}   & \multicolumn{1}{c|}{Observed} & 13.370  & 0.0083  & 1.4969  & 13.549  & 0.0278  & 1.3559  & 13.784   & 0.0491  & 1.2556  \\
                            & \multicolumn{1}{c|}{TNN}                      & 16.499  & 0.2185  & 0.5261  & 28.663  & 0.8002  & 0.1743  & 32.233   & 0.8974  & 0.1239  \\
                            & \multicolumn{1}{c|}{DCT-TNN}                  & 16.540  & 0.2191  & 0.5130  & 29.470  & 0.8272  & 0.1525  & 33.371   & 0.9199  & 0.1060  \\
                            & \multicolumn{1}{c|}{TTNN}                     & 22.646  & 0.5040  & 0.2780  & 32.062  & 0.9023  & 0.1105  & 37.835   & 0.9721  & 0.0605  \\
                            & \multicolumn{1}{c|}{DTNN}                     & 24.587  & 0.6179  & 0.2514  & 32.367  & 0.9051  & 0.1196  & 39.651   & 0.9788  & 0.0505  \\
                            & \multicolumn{1}{c|}{NTTNN}                    & \textBF{25.558}  & \textBF{0.6749}  & \textBF{0.1849}  & \textBF{36.402}  & \textBF{0.9643}  & \textBF{0.0536}  & \textBF{43.251}   & \textBF{0.9930}  & \textBF{0.0219}  \\ \hline
\multirow{6}{*}{$\mathit{Pavia~City}$} & \multicolumn{1}{c|}{Observed}                 & 13.321  & 0.0076  & 1.4991  & 13.500  & 0.0249  & 1.3556  & 13.735   & 0.0463  & 1.2547  \\
                            & \multicolumn{1}{c|}{TNN}                      & 15.643  & 0.1429  & 0.5632  & 28.355  & 0.8312  & 0.1879  & 32.055   & 0.9119  & 0.1533  \\
                            & \multicolumn{1}{c|}{DCT-TNN}                  & 16.464  & 0.1734  & 0.4430  & 30.899  & 0.9023  & 0.1273  & 37.125   & 0.9733  & 0.0767  \\
                            & \multicolumn{1}{c|}{TTNN}                     & 21.477  & 0.4029  & 0.1899  & 32.100  & 0.9237  & 0.1099  & 38.092   & 0.9787  & 0.0687  \\
                            & \multicolumn{1}{c|}{DTNN}                     & 23.190  & 0.5013  & 0.1719  & 31.840  & 0.9258  & 0.1006  & 38.416   & 0.9819  & 0.0616  \\
                            & \multicolumn{1}{c|}{NTTNN}                    & \textBF{24.405}  & \textBF{0.6587}  & \textBF{0.1438}  & \textBF{34.498}  & \textBF{0.9577}  &
                            \textBF{0.0812}  & \textBF{41.672}   & \textBF{0.9903}  & \textBF{0.0468}  \\ \Xhline{1.5pt}
\end{tabular}\label{HSItable}
\end{table}

\begin{figure*}[htp!]
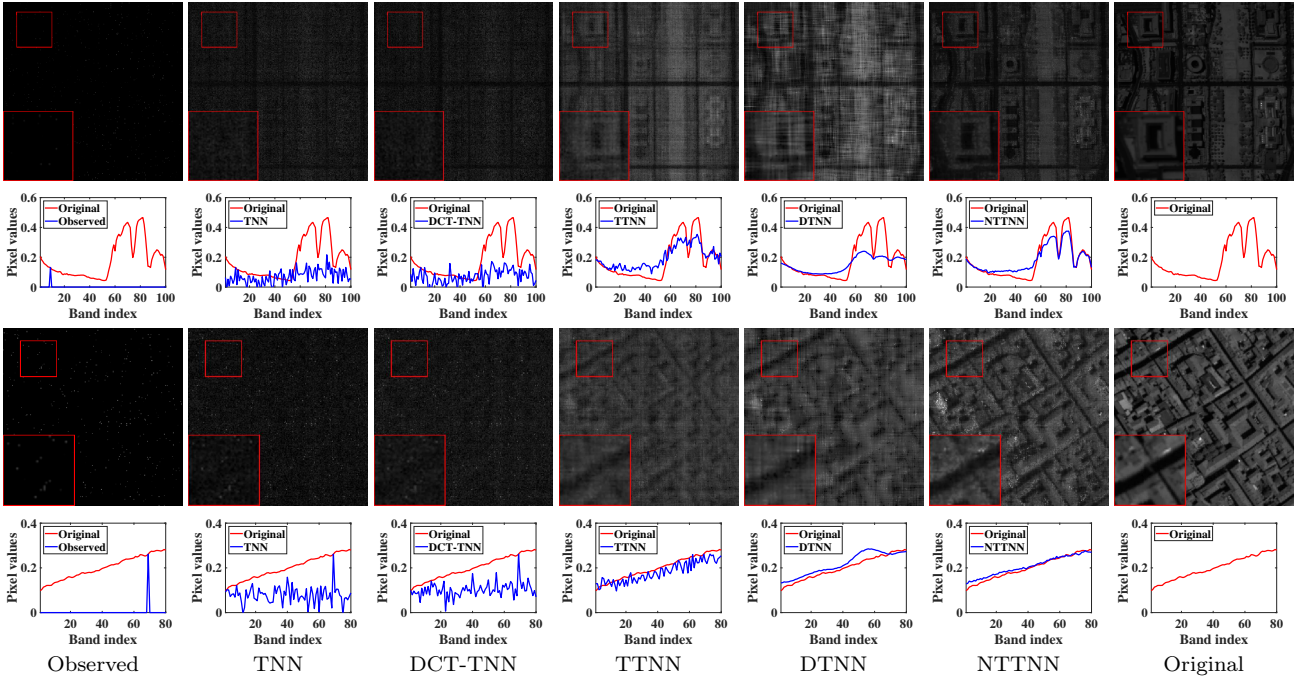

\vspace{-0.7cm}
  \footnotesize
  \setlength{\tabcolsep}{1.2pt}
  \centering
  \begin{tabular}{ccccccc}
  \includegraphics[width=0.14\textwidth]{WDC/pic/Observed-eps-converted-to.pdf}&
  \includegraphics[width=0.14\textwidth]{WDC/pic/TNN-eps-converted-to.pdf}&
  \includegraphics[width=0.14\textwidth]{WDC/pic/DCT-TNN-eps-converted-to.pdf}&
  \includegraphics[width=0.14\textwidth]{WDC/pic/TTNN-eps-converted-to.pdf}&
  \includegraphics[width=0.14\textwidth]{WDC/pic/DTNN-eps-converted-to.pdf}&
  \includegraphics[width=0.14\textwidth]{WDC/pic/NTTNN-eps-converted-to.pdf}&
  \includegraphics[width=0.14\textwidth]{WDC/pic/true-eps-converted-to.pdf}
    \\
  \includegraphics[width=0.14\textwidth]{WDC/tube/Observed-eps-converted-to.pdf}&
  \includegraphics[width=0.14\textwidth]{WDC/tube/TNN-eps-converted-to.pdf}&
  \includegraphics[width=0.14\textwidth]{WDC/tube/DCT-TNN-eps-converted-to.pdf}&
  \includegraphics[width=0.14\textwidth]{WDC/tube/TTNN-eps-converted-to.pdf}&
  \includegraphics[width=0.14\textwidth]{WDC/tube/DTNN-eps-converted-to.pdf}&
  \includegraphics[width=0.14\textwidth]{WDC/tube/NTTNN-eps-converted-to.pdf}&
  \includegraphics[width=0.14\textwidth]{WDC/tube/true-eps-converted-to.pdf}
  \\
  \includegraphics[width=0.14\textwidth]{Pavia/pic/Observed-eps-converted-to.pdf}&
  \includegraphics[width=0.14\textwidth]{Pavia/pic/TNN-eps-converted-to.pdf}&
  \includegraphics[width=0.14\textwidth]{Pavia/pic/DCT-TNN-eps-converted-to.pdf}&
  \includegraphics[width=0.14\textwidth]{Pavia/pic/TTNN-eps-converted-to.pdf}&
  \includegraphics[width=0.14\textwidth]{Pavia/pic/DTNN-eps-converted-to.pdf}&
  \includegraphics[width=0.14\textwidth]{Pavia/pic/NTTNN-eps-converted-to.pdf}&
  \includegraphics[width=0.14\textwidth]{Pavia/pic/true-eps-converted-to.pdf}
    \\
  \includegraphics[width=0.14\textwidth]{Pavia/tube/Observed-eps-converted-to.pdf}&
  \includegraphics[width=0.14\textwidth]{Pavia/tube/TNN-eps-converted-to.pdf}&
  \includegraphics[width=0.14\textwidth]{Pavia/tube/DCT-TNN-eps-converted-to.pdf}&
  \includegraphics[width=0.14\textwidth]{Pavia/tube/TTNN-eps-converted-to.pdf}&
  \includegraphics[width=0.14\textwidth]{Pavia/tube/DTNN-eps-converted-to.pdf}&
  \includegraphics[width=0.14\textwidth]{Pavia/tube/NTTNN-eps-converted-to.pdf}&
  \includegraphics[width=0.14\textwidth]{Pavia/tube/true-eps-converted-to.pdf}
  \\
   Observed &  TNN & DCT-TNN  &  TTNN & DTNN  & NTTNN & Original
\end{tabular}
   \caption{The results of one band and spectrum profiles at one spatial location of HSIs by different methods for SR = 1\%.
From top to bottom: $\mathit{WDC~Mall}$ and $\mathit{Pavia~City}$, respectively.
From left to right: the observed data, the reconstructed results by TNN, DCT-TNN, TTNN,
DTNN, NTTNN, and the original data, respectively.
   }\label{HSIpic}
\end{figure*}
\subsection{Experiments on MSIs Data}
In this part, we evaluate different methods on five MSIs from the CAVE database\footnote{\url{https://www.cs.columbia.edu/CAVE/databases/multispectral/}}: $\mathit{Balloons}$, $\mathit{Beads}$, $\mathit{Toy}$, $\mathit{Cloth}$, and $\mathit{Feathers}$. All MSIs have been resized to $256\times{256}\times{31}$ in the experiments. The SRs are set to 5\%, 10\%, and 15\%, respectively.

In Table \ref{MSItable}, we show the PSNR, SSIM, and SAM values of the recovered MSIs by different methods for different SRs. We
can note that NTTNN obtains the highest quality results for different MSIs with different SRs. In addition, Fig. \ref{MSIpic} displays the recovered results of one band and spectrum profiles of MSIs by different methods for SR = 5\%. From the visual comparison, it is clear that NTTNN performs best in preserving image edges and details, e.g., the symbols in the zoom-in regions of $\mathit{Toy}$. Moreover, we clearly observe that the spectral curves obtained by NTTNN better approximate the original ones than those obtained by the compared methods.
\begin{table}[htp!]
\centering
\renewcommand\arraystretch{1.5}
\setlength{\tabcolsep}{1.4mm}
\caption{The PSNR, SSIM, and SAM values of the recovered MSIs by different methods for different SRs.}\vspace{0.5cm}
\begin{tabular}{c|c|lll|lll|lll}
\Xhline{1.5pt}
\multirow{2}{*}{Data Index} & \multirow{2}{*}{methods} & \multicolumn{3}{c|}{SR=5\%} & \multicolumn{3}{c|}{SR=10\%} & \multicolumn{3}{c}{SR=15\%} \\ \cline{3-11}
                            &                          & PSNR    & SSIM    & SAM     & PSNR    & SSIM    & SAM     & PSNR     & SSIM    & SAM     \\ \Xhline{1.5pt}
\multirow{6}{*}{$\mathit{Balloons}$}   & \multicolumn{1}{c|}{Observed} &13.349 & 0.0959 & 1.5163 &13.762&0.1613&1.2774&14.011&0.1896&1.1934\\
                            & \multicolumn{1}{c|}{TNN}                 &31.642&0.8665&0.1940&36.270&0.9412&0.1230&39.375&0.9681&0.0894\\
                            & \multicolumn{1}{c|}{DCT-TNN}             &32.639&0.8911&0.1720&37.171&0.9521&0.1099&40.524&0.9758&0.0772\\
                            & \multicolumn{1}{c|}{TTNN}                &33.641&0.9259&0.1581&37.814&0.9608&0.1005& 41.957&0.9830&0.0651\\
                            & \multicolumn{1}{c|}{DTNN}                &33.418&0.9218&0.1580&37.394&0.9559&0.1145&42.982&0.9831&0.0671\\
                            & \multicolumn{1}{c|}{NTTNN}                    &\textBF{35.425}&\textBF{0.9387}&\textBF{0.1268}&\textBF{40.458}
                            &\textBF{0.9757}&\textBF{0.0784}&\textBF{43.633}&\textBF{0.9865}
                            &\textBF{0.0600}   \\ \hline
\multirow{6}{*}{$\mathit{Beads}$} & \multicolumn{1}{c|}{Observed}                 &14.416&0.1188&1.4032&14.651&0.1548&1.2956&14.898&0.1926&1.2105\\
                            & \multicolumn{1}{c|}{TNN}                      &19.364&0.4086&0.5922&23.508&0.6604&0.4270&26.052&0.7741&0.3381\\
                            & \multicolumn{1}{c|}{DCT-TNN}                  &19.696&0.4272&0.5629&23.434&0.6593&0.4109&26.238&0.7847&0.3181\\
                            & \multicolumn{1}{c|}{TTNN}                     &22.934&0.6789&0.4033&25.786&0.8086&0.3122&28.071&0.8458&0.2662\\
                            & \multicolumn{1}{c|}{DTNN}                     &22.827&0.6950&0.3933&25.659&0.8262&0.2987&30.145&0.9181&0.1895\\
                            & \multicolumn{1}{c|}{NTTNN}                    &\textBF{23.917}&\textBF{0.7162}&\textBF{0.3784}&\textBF{28.106}
                            &\textBF{0.8659}&\textBF{0.2484}&\textBF{31.327}&\textBF{0.9251}
                            &\textBF{0.1831}\\ \hline
\multirow{6}{*}{$\mathit{Toy}$} & \multicolumn{1}{c|}{Observed}             &10.631&0.2565&1.3874&10.866&0.2925&1.2821&11.114&0.3271&1.1991\\
                            & \multicolumn{1}{c|}{TNN}                      &28.749&0.8471&0.3287&32.549&0.9197&0.2295&35.453&0.9520&0.1734\\
                            & \multicolumn{1}{c|}{DCT-TNN}                  &28.462&0.8508&0.3054&33.487&0.9396&0.1903&36.599&0.9650&0.1390\\
                            & \multicolumn{1}{c|}{TTNN}                     &29.271&0.8653&0.2986&34.270&0.9435&0.1875&37.801&0.9704&0.1323\\
                            & \multicolumn{1}{c|}{DTNN}                     &29.023&0.8857&0.2998&32.838&0.9306&0.2451&38.470&0.9783&0.1086\\
                            & \multicolumn{1}{c|}{NTTNN}                    &\textBF{30.636}&\textBF{0.9165}&\textBF{0.2333}&\textBF{35.058}
                            &\textBF{0.9572}&\textBF{0.1659}&\textBF{39.711}&\textBF{0.9798}
                            &\textBF{0.1222}\\ \hline
\multirow{6}{*}{$\mathit{Cloth}$} & \multicolumn{1}{c|}{Observed}             &11.699&0.0336&1.3939&11.933&0.0578&1.2821&12.181&0.0829&1.1963\\
                            & \multicolumn{1}{c|}{TNN}                      &20.085&0.4275&0.2685&24.889&0.7225&0.1699&27.857&0.8344&0.1285\\
                            & \multicolumn{1}{c|}{DCT-TNN}                  &21.777&0.5225&0.2237&26.227&0.7790&0.1385&29.223&0.8743&0.1026\\
                            & \multicolumn{1}{c|}{TTNN}                     &22.749&0.6077&0.2155&25.458&0.7511&0.1474&28.622&0.8655&0.1070\\
                            & \multicolumn{1}{c|}{DTNN}                     &24.036&0.7139&0.1875&27.883&0.8666&0.1247&31.452&0.9330&0.0874\\
                            & \multicolumn{1}{c|}{NTTNN}                    &\textBF{25.109}&\textBF{0.7608}&\textBF{0.1464}&\textBF{29.779}
                            &\textBF{0.8971}&\textBF{0.0931}&\textBF{33.345}&\textBF{0.9457}
                            &\textBF{0.0684}\\ \hline
\multirow{6}{*}{$\mathit{Feathers}$} & \multicolumn{1}{c|}{Observed}             &13.356&0.1907&1.4062&13.590&0.2310&1.3008&13.838&0.2693&1.2162\\
                            & \multicolumn{1}{c|}{TNN}                      &25.029&0.7053&0.3531&31.624&0.8733&0.2063&34.571&0.9235&0.1529\\
                            & \multicolumn{1}{c|}{DCT-TNN}                  &27.842&0.7861&0.2713&32.581&0.8980&0.1697&35.763&0.9428&0.1223\\
                            & \multicolumn{1}{c|}{TTNN}                     &28.650&0.8119&0.2548&33.267&0.9133&0.1539&36.689&0.9546&0.1062\\
                            & \multicolumn{1}{c|}{DTNN}                     &28.164&0.8391&0.3085&33.109&0.9313&0.1725&37.198&0.9635&0.1179\\
                            & \multicolumn{1}{c|}{NTTNN}                    &\textBF{30.515}&\textBF{0.8801}&\textBF{0.2063}&\textBF{35.581}
                            &\textBF{0.9474}&\textBF{0.1225}&\textBF{39.229}&\textBF{0.9712}
                            &\textBF{0.0883}\\ \Xhline{1.5pt}
\end{tabular}\label{MSItable}
\end{table}

\begin{figure*}[htp!]
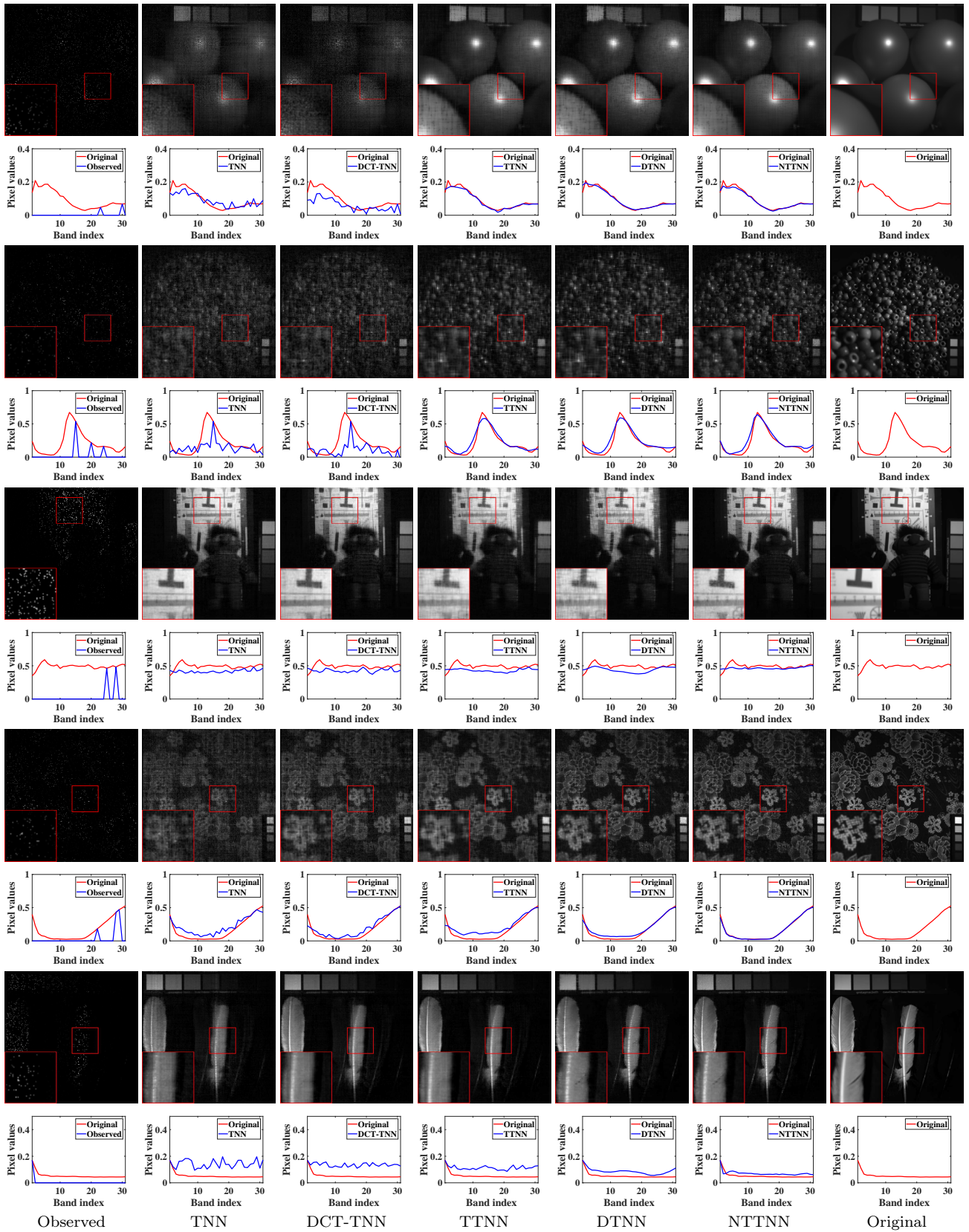

\vspace{-0.7cm}
  \footnotesize
  \setlength{\tabcolsep}{1.2pt}
  \centering
  \begin{tabular}{ccccccc}
  \includegraphics[width=0.14\textwidth]{balloons/pic/Observed-eps-converted-to.pdf}&
  \includegraphics[width=0.14\textwidth]{balloons/pic/TNN-eps-converted-to.pdf}&
  \includegraphics[width=0.14\textwidth]{balloons/pic/DCT-TNN-eps-converted-to.pdf}&
  \includegraphics[width=0.14\textwidth]{balloons/pic/TTNN-eps-converted-to.pdf}&
  \includegraphics[width=0.14\textwidth]{balloons/pic/DTNN-eps-converted-to.pdf}&
  \includegraphics[width=0.14\textwidth]{balloons/pic/NTTNN-eps-converted-to.pdf}&
  \includegraphics[width=0.14\textwidth]{balloons/pic/true-eps-converted-to.pdf}
  \\
  \includegraphics[width=0.14\textwidth]{balloons/tube/Observed-eps-converted-to.pdf}&
  \includegraphics[width=0.14\textwidth]{balloons/tube/TNN-eps-converted-to.pdf}&
  \includegraphics[width=0.14\textwidth]{balloons/tube/DCT-TNN-eps-converted-to.pdf}&
  \includegraphics[width=0.14\textwidth]{balloons/tube/TTNN-eps-converted-to.pdf}&
  \includegraphics[width=0.14\textwidth]{balloons/tube/DTNN-eps-converted-to.pdf}&
  \includegraphics[width=0.14\textwidth]{balloons/tube/NTTNN-eps-converted-to.pdf}&
  \includegraphics[width=0.14\textwidth]{balloons/tube/true-eps-converted-to.pdf}
  \\
  \includegraphics[width=0.14\textwidth]{beads/pic/Observed-eps-converted-to.pdf}&
  \includegraphics[width=0.14\textwidth]{beads/pic/TNN-eps-converted-to.pdf}&
  \includegraphics[width=0.14\textwidth]{beads/pic/DCT-TNN-eps-converted-to.pdf}&
  \includegraphics[width=0.14\textwidth]{beads/pic/TTNN-eps-converted-to.pdf}&
  \includegraphics[width=0.14\textwidth]{beads/pic/DTNN-eps-converted-to.pdf}&
  \includegraphics[width=0.14\textwidth]{beads/pic/NTTNN-eps-converted-to.pdf}&
  \includegraphics[width=0.14\textwidth]{beads/pic/true-eps-converted-to.pdf}
  \\
  \includegraphics[width=0.14\textwidth]{beads/tube/Observed-eps-converted-to.pdf}&
  \includegraphics[width=0.14\textwidth]{beads/tube/TNN-eps-converted-to.pdf}&
  \includegraphics[width=0.14\textwidth]{beads/tube/DCT-TNN-eps-converted-to.pdf}&
  \includegraphics[width=0.14\textwidth]{beads/tube/TTNN-eps-converted-to.pdf}&
  \includegraphics[width=0.14\textwidth]{beads/tube/DTNN-eps-converted-to.pdf}&
  \includegraphics[width=0.14\textwidth]{beads/tube/NTTNN-eps-converted-to.pdf}&
  \includegraphics[width=0.14\textwidth]{beads/tube/true-eps-converted-to.pdf}
  \\
  \includegraphics[width=0.14\textwidth]{toy/pic/Observed-eps-converted-to.pdf}&
  \includegraphics[width=0.14\textwidth]{toy/pic/TNN-eps-converted-to.pdf}&
  \includegraphics[width=0.14\textwidth]{toy/pic/DCT-TNN-eps-converted-to.pdf}&
  \includegraphics[width=0.14\textwidth]{toy/pic/TTNN-eps-converted-to.pdf}&
  \includegraphics[width=0.14\textwidth]{toy/pic/DTNN-eps-converted-to.pdf}&
  \includegraphics[width=0.14\textwidth]{toy/pic/NTTNN-eps-converted-to.pdf}&
  \includegraphics[width=0.14\textwidth]{toy/pic/true-eps-converted-to.pdf}
  \\
  \includegraphics[width=0.14\textwidth]{toy/tube/Observed-eps-converted-to.pdf}&
  \includegraphics[width=0.14\textwidth]{toy/tube/TNN-eps-converted-to.pdf}&
  \includegraphics[width=0.14\textwidth]{toy/tube/DCT-TNN-eps-converted-to.pdf}&
  \includegraphics[width=0.14\textwidth]{toy/tube/TTNN-eps-converted-to.pdf}&
  \includegraphics[width=0.14\textwidth]{toy/tube/DTNN-eps-converted-to.pdf}&
  \includegraphics[width=0.14\textwidth]{toy/tube/NTTNN-eps-converted-to.pdf}&
  \includegraphics[width=0.14\textwidth]{toy/tube/true-eps-converted-to.pdf}
  \\
  \includegraphics[width=0.14\textwidth]{cloth/5/pic/Observed-eps-converted-to.pdf}&
  \includegraphics[width=0.14\textwidth]{cloth/5/pic/TNN-eps-converted-to.pdf}&
  \includegraphics[width=0.14\textwidth]{cloth/5/pic/DCT-TNN-eps-converted-to.pdf}&
  \includegraphics[width=0.14\textwidth]{cloth/5/pic/TTNN-eps-converted-to.pdf}&
  \includegraphics[width=0.14\textwidth]{cloth/5/pic/DTNN-eps-converted-to.pdf}&
  \includegraphics[width=0.14\textwidth]{cloth/5/pic/NTTNN-eps-converted-to.pdf}&
  \includegraphics[width=0.14\textwidth]{cloth/5/pic/true-eps-converted-to.pdf}
  \\
  \includegraphics[width=0.14\textwidth]{cloth/5/tube/Observed-eps-converted-to.pdf}&
  \includegraphics[width=0.14\textwidth]{cloth/5/tube/TNN-eps-converted-to.pdf}&
  \includegraphics[width=0.14\textwidth]{cloth/5/tube/DCT-TNN-eps-converted-to.pdf}&
  \includegraphics[width=0.14\textwidth]{cloth/5/tube/TTNN-eps-converted-to.pdf}&
  \includegraphics[width=0.14\textwidth]{cloth/5/tube/DTNN-eps-converted-to.pdf}&
  \includegraphics[width=0.14\textwidth]{cloth/5/tube/NTTNN-eps-converted-to.pdf}&
  \includegraphics[width=0.14\textwidth]{cloth/5/tube/true-eps-converted-to.pdf}
  \\
  \includegraphics[width=0.14\textwidth]{feathers/5/pic/Observed-eps-converted-to.pdf}&
  \includegraphics[width=0.14\textwidth]{feathers/5/pic/TNN-eps-converted-to.pdf}&
  \includegraphics[width=0.14\textwidth]{feathers/5/pic/DCT-TNN-eps-converted-to.pdf}&
  \includegraphics[width=0.14\textwidth]{feathers/5/pic/TTNN-eps-converted-to.pdf}&
  \includegraphics[width=0.14\textwidth]{feathers/5/pic/DTNN-eps-converted-to.pdf}&
  \includegraphics[width=0.14\textwidth]{feathers/5/pic/NTTNN-eps-converted-to.pdf}&
  \includegraphics[width=0.14\textwidth]{feathers/5/pic/true-eps-converted-to.pdf}
  \\
  \includegraphics[width=0.14\textwidth]{feathers/5/tube/Observed-eps-converted-to.pdf}&
  \includegraphics[width=0.14\textwidth]{feathers/5/tube/TNN-eps-converted-to.pdf}&
  \includegraphics[width=0.14\textwidth]{feathers/5/tube/DCT-TNN-eps-converted-to.pdf}&
  \includegraphics[width=0.14\textwidth]{feathers/5/tube/TTNN-eps-converted-to.pdf}&
  \includegraphics[width=0.14\textwidth]{feathers/5/tube/DTNN-eps-converted-to.pdf}&
  \includegraphics[width=0.14\textwidth]{feathers/5/tube/NTTNN-eps-converted-to.pdf}&
  \includegraphics[width=0.14\textwidth]{feathers/5/tube/true-eps-converted-to.pdf}
  \\
   Observed & TNN & DCT-TNN & TTNN & DTNN  & NTTNN & Original
\end{tabular}
   \caption{The results of one band and spectrum profiles at one spatial location of MSIs by different methods for SR = 5\%.
From top to bottom: $\mathit{Balloons}$, $\mathit{Beads}$, $\mathit{Toy}$, $\mathit{Cloth}$, and $\mathit{Feathers}$, respectively.
From left to right: the observed data, the reconstructed results by TNN, DCT-TNN, TTNN,
DTNN, NTTNN, and the original data, respectively.
   }\label{MSIpic}
\end{figure*}
\subsection{Experiments on Videos Data}
In this part, we verify the effectiveness of the proposed NTTNN on three videos\footnote{\url{http://trace.eas.asu.edu/yuv/}.}: $\mathit{Carphone},\mathit{Hall},$ and $\mathit{News}$. All videos have been resized to $144\times{176}\times{100}$ in the experiments. The SRs are set to 5\%, 10\%, and 15\%, respectively.

Table \ref{videotable} shows the quantitative metrics of the recovered videos obtained by different methods for different SRs. We can observe that the proposed NTTNN clearly outperforms the other compared linear transform-based TNN methods for all SRs.
For visual comparisons, we show the recovered results of one band and one mode-3 tube of videos by different methods for SR = 5\% in Fig. \ref{videopic}. From Fig. \ref{videopic}, we can observe that NTTNN outperforms the compared methods in preserving details
and structures, e.g., the dancer in zoom-in regions of $\mathit{News}$. Moreover, NTTNN yields the closest spectral curves in all cases.
\begin{table}[htp!]
\centering
\renewcommand\arraystretch{1.5}
\setlength{\tabcolsep}{1.2mm}
\caption{The PSNR, SSIM, and SAM values of the recovered videos by different methods for different SRs.}\vspace{0.5cm}
\begin{tabular}{c|c|lll|lll|lll}
\Xhline{1.5pt}
\multirow{2}{*}{Data Index} & \multirow{2}{*}{methods} & \multicolumn{3}{c|}{SR=5\%} & \multicolumn{3}{c|}{SR=10\%} & \multicolumn{3}{c}{SR=15\%} \\ \cline{3-11}
                            &                          & PSNR    & SSIM    & SAM     & PSNR    & SSIM    & SAM     & PSNR     & SSIM    & SAM     \\ \Xhline{1.5pt}
\multirow{6}{*}{$\mathit{Carphone}$}   & \multicolumn{1}{c|}{Observed} &6.814&0.0143&1.3521&7.048&0.0231&1.2532&7.296&0.0311&1.1758\\
                            & \multicolumn{1}{c|}{TNN}                      &25.122&0.7222&0.1138&27.229&0.7909&0.0942&28.637&0.8309&0.0825\\
                            & \multicolumn{1}{c|}{DCT-TNN}                  &25.516&0.7395&0.1070&27.633&0.8074&0.0881&29.049&0.8459&0.0769\\
                            & \multicolumn{1}{c|}{TTNN}                     &26.597&0.8162&0.0852&28.708&0.8666&0.0724&30.146&0.8932&0.0644\\
                            & \multicolumn{1}{c|}{DTNN}                     &26.941&0.8328&0.0833&29.178&0.8756&0.0694&30.635&0.8983&0.0604\\
                            & \multicolumn{1}{c|}{NTTNN}                    &\textBF{27.460}&\textBF{0.8355}&\textBF{0.0814}&\textBF{29.614}
                            &\textBF{0.8833}&\textBF{0.0676}&\textBF{31.059}&\textBF{0.9097}
                            &\textBF{0.0588}\\ \hline
\multirow{6}{*}{$\mathit{Hall}$} & \multicolumn{1}{c|}{Observed}            &4.835&0.0071&1.3516&5.070&0.0123&1.2529&5.319&0.0179&1.1757\\
                            & \multicolumn{1}{c|}{TNN}                      &28.033&0.9010&0.0434&30.868&0.9387&0.0350&32.436&0.9522&0.0309\\
                            & \multicolumn{1}{c|}{DCT-TNN}                  &28.042&0.9034&0.0434&30.842&0.9380&0.0352&32.369&0.9510&0.0311\\
                            & \multicolumn{1}{c|}{TTNN}                     &28.781&0.9163&0.0413&31.283&0.9431&0.0343&32.824&0.9548&0.0303\\
                            & \multicolumn{1}{c|}{DTNN}                     &27.765&0.9040&0.0429&31.722&0.9496&0.0343&33.814&0.9629&0.0295\\
                            & \multicolumn{1}{c|}{NTTNN}                    &\textBF{30.140}&\textBF{0.9410}&\textBF{0.0352}&\textBF{32.713}
                            &\textBF{0.9595}&\textBF{0.0298}&\textBF{34.261}&\textBF{0.9688}
                            &\textBF{0.0264}   \\ \hline
\multirow{6}{*}{$\mathit{News}$} & \multicolumn{1}{c|}{Observed}            &8.991&0.0207&1.3516&9.227&0.0347&1.2528&9.476&0.0477&1.1756\\
                            & \multicolumn{1}{c|}{TNN}                      &26.715&0.8208&0.1002&29.085&0.8810&0.0814&30.762&0.9119&0.0691\\
                            & \multicolumn{1}{c|}{DCT-TNN}                  &27.069&0.8335&0.0944&29.501&0.8923&0.0756&31.206&0.9222&0.0636\\
                            & \multicolumn{1}{c|}{TTNN}                     &27.674&0.8547&0.0881&30.011&0.9054&0.0705&31.708&0.9319&0.0592\\
                            & \multicolumn{1}{c|}{DTNN}                     &26.277&0.8605&0.0848&29.844&0.9300&0.0646&32.384&0.9479&0.0546\\
                            & \multicolumn{1}{c|}{NTTNN}                    &\textBF{28.195}&\textBF{0.8896}&\textBF{0.0727}&\textBF{31.028}
                            &\textBF{0.9309}&\textBF{0.0574}&\textBF{32.912}&\textBF{0.9512}
                            &\textBF{0.0479}\\ \Xhline{1.5pt}

\end{tabular}\label{videotable}
\end{table}
\begin{figure*}[htp!]
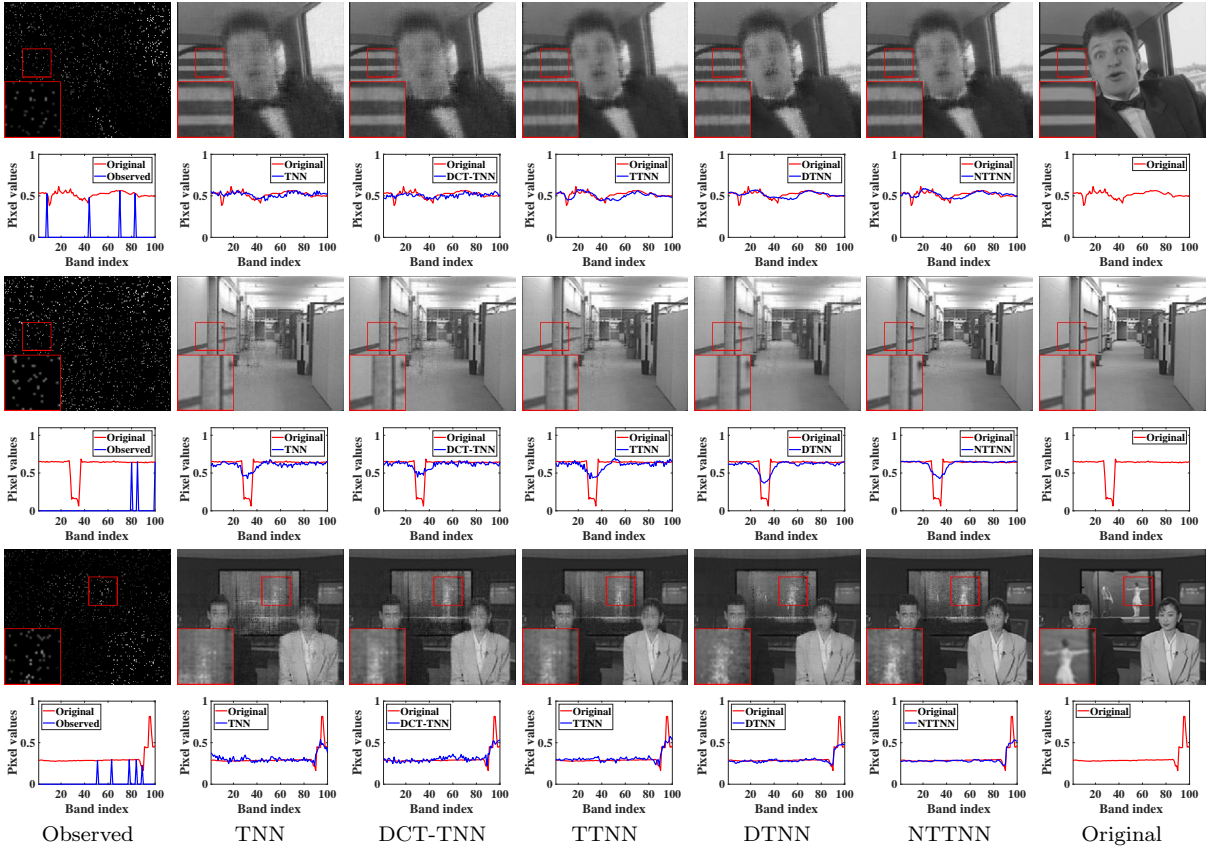

\vspace{-0.9cm}
  \footnotesize
  \setlength{\tabcolsep}{1.2pt}
  \centering
  \begin{tabular}{ccccccc}
  \includegraphics[width=0.13\textwidth]{carphone/pic/5/Observed-eps-converted-to.pdf}&
  \includegraphics[width=0.13\textwidth]{carphone/pic/5/TNN-eps-converted-to.pdf}&
  \includegraphics[width=0.13\textwidth]{carphone/pic/5/DCT-TNN-eps-converted-to.pdf}&
  \includegraphics[width=0.13\textwidth]{carphone/pic/5/TTNN-eps-converted-to.pdf}&
  \includegraphics[width=0.13\textwidth]{carphone/pic/5/DTNN-eps-converted-to.pdf}&
  \includegraphics[width=0.13\textwidth]{carphone/pic/5/NTTNN-eps-converted-to.pdf}&
  \includegraphics[width=0.13\textwidth]{carphone/pic/5/true-eps-converted-to.pdf}
  \\
  \includegraphics[width=0.13\textwidth]{carphone/tube/Observed-eps-converted-to.pdf}&
  \includegraphics[width=0.13\textwidth]{carphone/tube/TNN-eps-converted-to.pdf}&
  \includegraphics[width=0.13\textwidth]{carphone/tube/DCT-TNN-eps-converted-to.pdf}&
  \includegraphics[width=0.13\textwidth]{carphone/tube/TTNN-eps-converted-to.pdf}&
  \includegraphics[width=0.13\textwidth]{carphone/tube/DTNN-eps-converted-to.pdf}&
  \includegraphics[width=0.13\textwidth]{carphone/tube/NTTNN-eps-converted-to.pdf}&
  \includegraphics[width=0.13\textwidth]{carphone/tube/true-eps-converted-to.pdf}
  \\
  \includegraphics[width=0.13\textwidth]{hall/pic/5/Observed-eps-converted-to.pdf}&
  \includegraphics[width=0.13\textwidth]{hall/pic/5/TNN-eps-converted-to.pdf}&
  \includegraphics[width=0.13\textwidth]{hall/pic/5/DCT-TNN-eps-converted-to.pdf}&
  \includegraphics[width=0.13\textwidth]{hall/pic/5/TTNN-eps-converted-to.pdf}&
  \includegraphics[width=0.13\textwidth]{hall/pic/5/DTNN-eps-converted-to.pdf}&
  \includegraphics[width=0.13\textwidth]{hall/pic/5/NTTNN-eps-converted-to.pdf}&
  \includegraphics[width=0.13\textwidth]{hall/pic/5/true-eps-converted-to.pdf}
  \\
  \includegraphics[width=0.13\textwidth]{hall/tube/Observed-eps-converted-to.pdf}&
  \includegraphics[width=0.13\textwidth]{hall/tube/TNN-eps-converted-to.pdf}&
  \includegraphics[width=0.13\textwidth]{hall/tube/DCT-TNN-eps-converted-to.pdf}&
  \includegraphics[width=0.13\textwidth]{hall/tube/TTNN-eps-converted-to.pdf}&
  \includegraphics[width=0.13\textwidth]{hall/tube/DTNN-eps-converted-to.pdf}&
  \includegraphics[width=0.13\textwidth]{hall/tube/NTTNN-eps-converted-to.pdf}&
  \includegraphics[width=0.13\textwidth]{hall/tube/true-eps-converted-to.pdf}
  \\
  \includegraphics[width=0.13\textwidth]{news/pic/5/Observed-eps-converted-to.pdf}&
  \includegraphics[width=0.13\textwidth]{news/pic/5/TNN-eps-converted-to.pdf}&
  \includegraphics[width=0.13\textwidth]{news/pic/5/DCT-TNN-eps-converted-to.pdf}&
  \includegraphics[width=0.13\textwidth]{news/pic/5/TTNN-eps-converted-to.pdf}&
  \includegraphics[width=0.13\textwidth]{news/pic/5/DTNN-eps-converted-to.pdf}&
  \includegraphics[width=0.13\textwidth]{news/pic/5/NTTNN-eps-converted-to.pdf}&
  \includegraphics[width=0.13\textwidth]{news/pic/5/true-eps-converted-to.pdf}
  \\
  \includegraphics[width=0.13\textwidth]{news/tube/Observed-eps-converted-to.pdf}&
  \includegraphics[width=0.13\textwidth]{news/tube/TNN-eps-converted-to.pdf}&
  \includegraphics[width=0.13\textwidth]{news/tube/DCT-TNN-eps-converted-to.pdf}&
  \includegraphics[width=0.13\textwidth]{news/tube/TTNN-eps-converted-to.pdf}&
  \includegraphics[width=0.13\textwidth]{news/tube/DTNN-eps-converted-to.pdf}&
  \includegraphics[width=0.13\textwidth]{news/tube/NTTNN-eps-converted-to.pdf}&
  \includegraphics[width=0.13\textwidth]{news/tube/true-eps-converted-to.pdf}
  \\
  Observed & TNN & DCT-TNN & TTNN & DTNN  & NTTNN &  Original
\end{tabular}
   \caption{The results of one band and spectral curves at one spatial location of videos by different methods for SR = 5\%.
From top to bottom: $\mathit{Carphone}$, $\mathit{Hall}$, and $\mathit{News}$, respectively.
From left to right:  the observed data, the reconstructed results by TNN, DCT-TNN, TTNN,
DTNN, NTTNN, and the original data, respectively.
   }\label{videopic}\vspace{-0cm}
\end{figure*}
\vspace{-0cm}
\section{Discussion}\label{section5}
\subsection{Analysis of row
 number $r$ of $\mathbf{T}$}
In this subsection, we discuss the influence of row number $r$ of $\mathbf{T}$ on MSI $\mathit{Toy}$ with SR=5\%. From the Fig. \ref{disr}(a), we can observe that the energy of singular values of recovered result of NTTNN with less row number $r$ is more concentrated, which implies that the recovered result of NTTNN with less row number $r$ is more low-rank. Furthermore, Fig. \ref{disr}(b) plots PSNR and SSIM values of recovered MSIs by NTTNN with different row number $r$ of $\mathbf{T}$. From the Fig. \ref{disr}(b), we can observed that NTTNN with $r=5$ obtain the best recovered result in terms of PSNR and SSIM values. Therefore, in all experiments, the row number $r$ of $\mathbf{T}$ is selected from \{3,4,5,6,7,8,9,10\}, which is much less than $n_3$.
\begin{figure}[htp!]
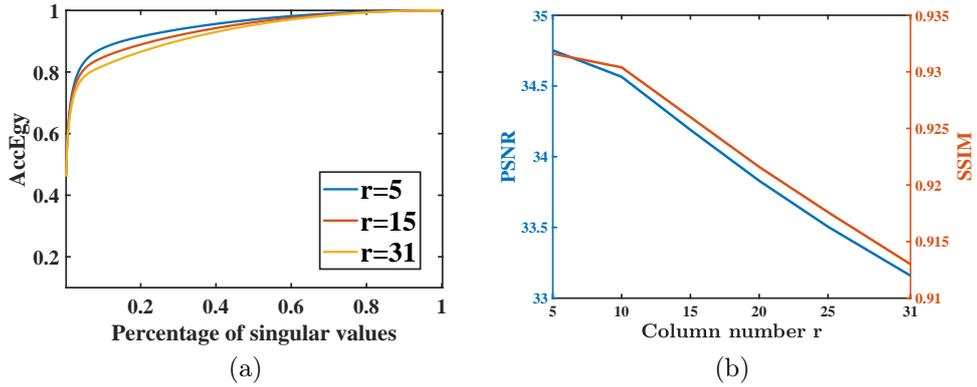

  \setlength{\tabcolsep}{0.01cm}
  \centering
  \begin{tabular}{cc}
  \includegraphics[width=0.38\textwidth]{balloons/discussion_r/AccEgy-eps-converted-to.pdf}&
  \includegraphics[width=0.38\textwidth]{balloons/discussion_r/PSNRSSIM-eps-converted-to.pdf}
  \\ (a) & (b)
\end{tabular}
\caption{(a) The AccEgy with the corresponding percentage of singular values of recovered results by NTTNN with different row number $r$ of $\mathbf{T}$. (b) The PSNR and SSIM values of NTTNN with different row number $r$ of $\mathbf{T}$. }\label{disr}
\end{figure}
\subsection{The indispensability of $\mathbf{T}$ and $\phi$}\label{disTandpsipart}
In this part, we analyze the effectiveness of $\mathbf{T}$ and $\phi$ in the proposed nonlinear transform $\psi$ by reserving only $\mathbf{T}$ or $\phi$, which are denoted as NTTNN(linear) and NTTNN(nonlinear), respectively. We conduct the numerical experiment on MSI $\mathit{Toy}$ by NTTNN(linear), NTTNN(nonlinear), and NTTNN with SR 5\%, 10\%, and 15\%, respectively.

Fig. \ref{disTandpsi} plots curves of the AccEgy with the corresponding percentage of singular values of recovered results by NTTNN(linear), NTTNN(nonlinear), and NTTNN with SR 5\%, 10\%, and 15\%, respectively. We can observe that the linear transform $\mathbf{T}$ and nonlinear transform $\phi$ together contribute to the most concentrate energy of
the singular values of recovered results of the proposed NTTNN as compared with the linear transform $\mathbf{T}$ alone and the nonlinear transform $\phi$ alone, i.e., NTTNN(linear) and NTTNN(nonlinear), respectively.

Moreover, Table \ref{discuss3} reports PSNR, SSIM, and SAM values of the recovered MSI $\mathit{Balloons}$ by NTTNN(linear), NTTNN(nonlinear), and NTTNN for different SRs. We can observe that NTTNN outperforms NTTNN(linear) and NTTNN(nonlinear) in terms of PSNR, SSIM,
and SAM values due to NTTNN can obtain the most concentrate energy of
the singular values of recovered results.
Therefore, we suggest the composite nonlinear transform $\psi$ consisting of $\mathbf{T}$ and $\phi$ to obtain a better low-rank approximation of the transformed tensor.
\begin{figure}[htp!]
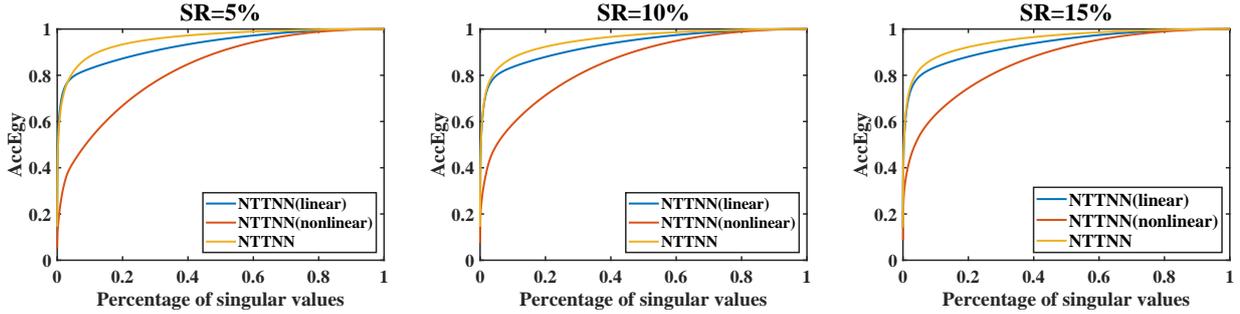

  \setlength{\tabcolsep}{0.01cm}
  \centering
  \begin{tabular}{ccc}
  \includegraphics[width=0.33\textwidth]{balloons/discussion_linear_nonlinear/5-eps-converted-to.pdf}&
  \includegraphics[width=0.33\textwidth]{balloons/discussion_linear_nonlinear/10-eps-converted-to.pdf}&
  \includegraphics[width=0.33\textwidth]{balloons/discussion_linear_nonlinear/15-eps-converted-to.pdf}
\end{tabular}
\caption{The AccEgy with the corresponding percentage of singular values of recovered results by NTTNN(linear), NTTNN(nonlinear), and NTTNN with SR 5\%, 10\%, and 15\%, respectively.}\label{disTandpsi}
\end{figure}
\begin{table}[htp!]
\centering
\renewcommand\arraystretch{1.5}
\setlength{\tabcolsep}{1.3mm}
\caption{The PSNR, SSIM, and SAM values of the recovered MSI $\mathit{Balloons}$ by NTTNN(linear), NTTNN(nonlinear), and NTTNN for different SRs.}\vspace{0.5cm}
\begin{tabular}{c|lll|lll|lll}
\Xhline{1.5pt}
 \multirow{2}{*}{Methods} & \multicolumn{3}{c|}{SR=5\%} & \multicolumn{3}{c|}{SR=10\%} & \multicolumn{3}{c}{SR=15\%} \\ \cline{2-10}
                                                      & PSNR    & SSIM    & SAM     & PSNR    & SSIM    & SAM     & PSNR     & SSIM    & SAM     \\ \Xhline{1.5pt}
 \multicolumn{1}{c|}{Observed} &13.349&0.0959 &1.5163 &13.762&0.1613&1.2774 &14.011 &0.1896& 1.1934\\
 \multicolumn{1}{c|}{NTTNN(linear)} &34.205&0.9322&0.1476&37.890&0.9643&0.1115&39.870&0.9746&0.0958\\
 \multicolumn{1}{c|}{NTTNN(nonlinear)} &19.823&0.4974&0.3798&24.280&0.6990&0.2521&27.477&0.8090&0.1898\\
  \multicolumn{1}{c|}{NTTNN} &\textBF{35.425}&\textBF{0.9387}&\textBF{0.1268}&\textBF{40.458}&\textBF{0.9757}&
  \textBF{0.0784}&\textBF{43.633}&\textBF{0.9865}&\textBF{0.0600}\\\Xhline{1.5pt}
\end{tabular}\label{discuss3}
\end{table}

\subsection{Effectiveness of nonlinear transform}\label{nonlinearcomparison}
In this subsection, we further verify the effectiveness of nonlinear transform in the proposed framework. Specifically,
we compare the performance of NTTNN without nonlinear function $\phi$ (denoted as NTTNN(linear)) and NTTNN with different
nonlinear transforms, i.e., Sigmoid function \cite{nonlinearfunction}, Softplus function \cite{softplus}, and Hyperbolic tangent (Tanh) function \cite{nonlinearfunction}.

Table \ref{discuss1} reports the PSNR, SSIM, and SAM values of the recovered HSI $\mathit{WDC~Mall}$ by NTTNN with different nonlinear function for different SRs. We can observe that NTTNN(Tanh) obtains the best recovered results for SR 5\% and 10\%, while NTTNN(Sigmoid) obtains the best recovered results for challenging case SR 1\%. Additionally,
the performance of NTTNN with different nonlinear functions compared with NTTNN(linear) is improved, which demonstrates the nonlinear function play an important role in our NTTNN framework.

\begin{table}[htp!]
\centering
\renewcommand\arraystretch{1.5}
\setlength{\tabcolsep}{1.3mm}
\caption{The PSNR, SSIM, and SAM values of the recovered HSI $\mathit{WDC~Mall}$ by NTTNN with different nonlinear functions for different SRs.}\vspace{0.5cm}
\begin{tabular}{c|lll|lll|lll}
\Xhline{1.5pt}
 \multirow{2}{*}{Methods} & \multicolumn{3}{c|}{SR=1\%} & \multicolumn{3}{c|}{SR=5\%} & \multicolumn{3}{c}{SR=10\%} \\ \cline{2-10}
                                                      & PSNR    & SSIM    & SAM     & PSNR    & SSIM    & SAM     & PSNR     & SSIM    & SAM     \\ \Xhline{1.5pt}
 \multicolumn{1}{c|}{Observed} &13.370& 0.0083& 1.4969&13.549&0.0278& 1.3559& 13.784&0.0491& 1.2556\\
 \multicolumn{1}{c|}{NTTNN(linear)} &24.863&0.6188&0.1928&33.882&0.9337&0.0787&38.635&0.9766&0.0474\\
 \multicolumn{1}{c|}{NTTNN(Sigmoid)} &\textBF{26.523}&\textBF{0.7142}&\textBF{0.1723}&34.287&0.9414&0.0663&41.462&0.9885&0.0260\\
  \multicolumn{1}{c|}{NTTNN(Softplus)} &25.987&0.6866&0.1881&34.713&0.9465&0.0651&42.159&0.9904&0.0247\\
 \multicolumn{1}{c|}{TTNN(Tanh)}&25.558&0.6749&0.1849&\textBF{36.402}&\textBF{0.9643}&\textBF{0.0536}
&\textBF{43.251}&\textBF{0.9930}&\textBF{0.0219}\\ \Xhline{1.5pt}

\end{tabular}\label{discuss1}
\end{table}
\subsection{Comparison of different initialization}\label{lowrankcomparison}
In this subsection, we discuss the performance of NTTNN with different initialization of $\mathcal{X}$.
We consider initializing $\mathcal{X}^0$ by the following: the observed tensor, the result of TNN method, and the linear interpolation of $\mathcal{X}^0$, which denote as NTTNN(Observed), NTTNN(TNN), and NTTNN(Interpolation), respectively.

Table \ref{discuss2} reports the PSNR, SSIM, and SAM values of the recovered HSI $\mathit{WDC~Mall}$ by NTTNN(Observed), NTTNN(TNN), and NTTNN(Interpolation) for different SRs.
We can observe that NTTNN(Interpolation) and NTTNN(TNN) outperform NTTNN(Observed) for all SRs, which demonstrates using good and low computational cost initialization can improve the performance
of NTTNN. Additionally, NTTNN(Interpolation) outperforms NTTNN(TNN) for extremely low SRs 1\% and 5\%, and both of them obtain good performance for relatively high SR 10\%.
The reason behind this phenomenon is the interpolation method outperforms the TNN method for extremely low SRs.
Therefore, throughout all the experiments in this paper, we employ good and low computational cost linear interpolation strategy to fill in the missing pixels and obtain $\mathcal{X}^0$  for TTNN, DTNN, and our method.

\begin{table}[htp!]
\centering
\renewcommand\arraystretch{1.4}
\setlength{\tabcolsep}{1.4mm}
\caption{The PSNR, SSIM, and SAM values of the recovered HSI $\mathit{WDC~Mall}$ by NTTNN with different initialization for different SRs.}\vspace{0.5cm}
\begin{tabular}{c|lll|lll|lll}
\Xhline{1.5pt}
 \multirow{2}{*}{methods} & \multicolumn{3}{c|}{SR=1\%} & \multicolumn{3}{c|}{SR=5\%} & \multicolumn{3}{c}{SR=10\%} \\ \cline{2-10}
                                                      & PSNR    & SSIM    & SAM     & PSNR    & SSIM    & SAM     & PSNR     & SSIM    & SAM     \\ \Xhline{1.5pt}
 \multicolumn{1}{c|}{NTTNN(Observed)} &21.391&0.4825&0.3212&33.980&0.9347&0.0758&40.193&0.9802&0.0510\\
 \multicolumn{1}{c|}{NTTNN(TNN)}                      &22.651&0.5544&0.2283&35.863&0.9544&0.0554&\textBF{43.969}&0.9919&0.0256\\
 \multicolumn{1}{c|}{NTTNN(Interpolation)}                  &\textBF{25.558}&\textBF{0.6749}&\textBF{0.1849}&\textBF{36.402}&\textBF{0.9643}&\textBF{0.0536}
 &43.251&\textBF{0.9930}&\textBF{0.0219}\\ \Xhline{1.5pt}
\end{tabular}\label{discuss2}
\end{table}
\subsection{Numerical convergence}
In this subsection, we evaluate the numerical convergence of the PAM-based algorithm for the proposed method to validate the theoretical convergence. Taking the HSI $\mathit{WDC~Mall}$, MSI $\mathit{Balloons}$, and video $\mathit{Carphone}$ for different SRs as examples, Fig. \ref{convergence} displays the relative change curves of the proposed PAM-based algorithm. We can clearly observe that the relative error decreases as the number of iterations increase, demonstrating the numerical convergence of the proposed PAM-based algorithm.
\begin{figure}[htp!]
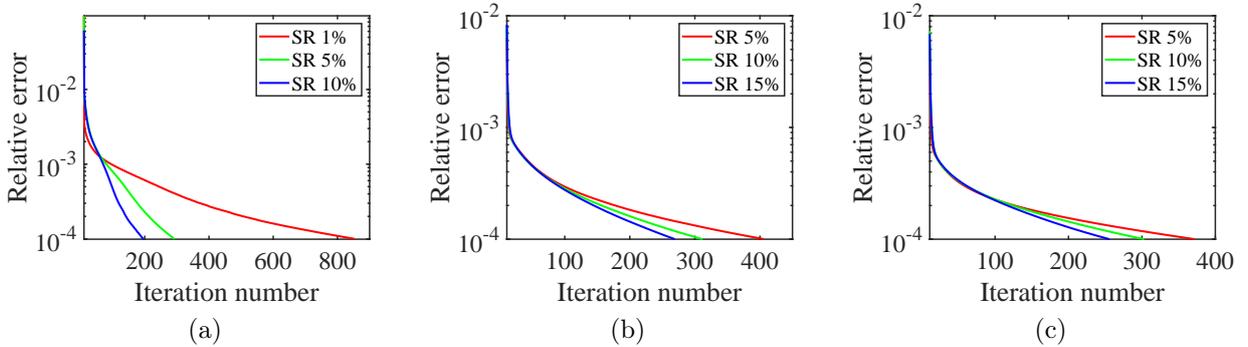

  \setlength{\tabcolsep}{0.01cm}
  \centering
  \begin{tabular}{ccc}
  \includegraphics[width=0.33\textwidth]{WDC/convergence-eps-converted-to.pdf}&
  \includegraphics[width=0.33\textwidth]{balloons/convergence-eps-converted-to.pdf}&
  \includegraphics[width=0.33\textwidth]{carphone/convergence-eps-converted-to.pdf}
  \\ (a) & (b) & (c)
\end{tabular}
\caption{Curves of relative errors versus iterations. (a) HSI $\mathit{WDC~Mall}$. (b) MSI $\mathit{Balloons}$. (c) Video $\mathit{Carphone}$.}\label{convergence}
\end{figure}
\section{Conclusion}\label{section6}
In this paper, we proposed the nonlinear transform for the underlying tensor and developed the corresponding nonlinear transform-based TNN (NTTNN). More concretely, the proposed
nonlinear transform is a composite transform consisting  of the linear semi-orthogonal transform along the third mode and the element-wise nonlinear transform on frontal slices of the tensor under the linear semi-orthogonal transform, which are indispensable and complementary in the composite transform to fully exploit the underlying low-rankness.
The proposed NTTNN could enhance the low-rank approximation of the underlying tensor and can be regarded as a unified transform-based TNN family including
many classic transform-based TNN methods.
Moreover, based on the suggested low-rank metric, i.e., NTTNN, we proposed the corresponding LRTC model and developed an efficient PAM-based algorithm. Theoretically, we proved that the sequence generated by the proposed method is bounded and converges to a critical point.
Massive experimental results on different types of
multi-dimensional images show that NTTNN reconstructs better results compared to the state-of-the-art linear transform-based TNN methods quantitatively and visually.
{\small
\bibliographystyle{IEEEtran}
\bibliography{NTTNN_reference}
}

\end{document}